\newcommand{\colorul}[2][orange]{%
  \bgroup
  \setulcolor{#1}%
  \ul{#2}%
  \egroup
}
\theoremstyle{plain}
\newtheorem{theorem}{Theorem}[section]
\newtheorem{lemma}[theorem]{Lemma}
\newtheorem{corollary}[theorem]{Corollary}
\theoremstyle{definition}
\newtheorem{definition}[theorem]{Definition}
\theoremstyle{remark}
\icmltitlerunning{Submission and Formatting Instructions for ICML 2025}
\begin{document}

\twocolumn[
\icmltitle{KABB: Knowledge-Aware Bayesian Bandits \\ for Dynamic Expert Coordination in Multi-Agent Systems}



\icmlsetsymbol{equal}{*}

\begin{icmlauthorlist}
\icmlauthor{Jusheng zhang}{1111,yyy}
\icmlauthor{Firstname2 Lastname2}{equal,yyy,comp}
\icmlauthor{Firstname3 Lastname3}{comp}
\icmlauthor{Firstname4 Lastname4}{sch}
\icmlauthor{Firstname5 Lastname5}{yyy}
\icmlauthor{Firstname6 Lastname6}{sch,yyy,comp}
\icmlauthor{Firstname7 Lastname7}{comp}
\icmlauthor{Firstname8 Lastname8}{sch}
\icmlauthor{Firstname8 Lastname8}{yyy,comp}
\end{icmlauthorlist}

\icmlaffiliation{yyy}{Department of XXX, University of YYY, Location, Country}
\icmlaffiliation{comp}{Company Name, Location, Country}
\icmlaffiliation{sch}{School of ZZZ, Institute of WWW, Location, Country}

\icmlcorrespondingauthor{Firstname1 Lastname1}{first1.last1@xxx.edu}
\icmlcorrespondingauthor{Firstname2 Lastname2}{first2.last2@www.uk}

\icmlkeywords{Machine Learning, ICML}

\vskip 0.3in
]



\printAffiliationsAndNotice{\icmlEqualContribution} 

\begin{abstract}
As scaling large language models faces prohibitive costs, multi-agent systems emerge as a promising alternative, though challenged by static knowledge assumptions and coordination inefficiencies. 
We introduce Knowledge-Aware Bayesian Bandits (KABB), a novel framework that enhances multi-agent system coordination through semantic understanding and dynamic adaptation. The framework features three key innovations: a three-dimensional knowledge distance model for deep semantic understanding, a dual-adaptation mechanism for continuous expert optimization, and a knowledge-aware Thompson Sampling strategy for efficient expert selection. Extensive evaluation demonstrates that our KABB achieves an optimal cost-performance balance, maintaining high performance while keeping computational demands relatively low in multi-agent coordination.
We will release the source code of our method in accordance with the review policy.
\end{abstract}    
\section{Introduction}

With the rapid advancement of large language models (LLMs), their applications have expanded to complex tasks such as cross-domain knowledge integration and multistep decision-making. Although many LLMs \cite{achiam2023gpt,liu2024deepseek,adams2024llama,team2024gemma,bai2023qwen} demonstrate impressive versatility in various tasks through techniques such as in-context learning and instruction-tuning, their performance remains constrained by factors such as model size and the limitations of training data \cite{jiang2023llm,lu2024merge}. Scaling these models further to improve performance is prohibitively expensive and often requires retraining on datasets comprising trillions of tokens.

Multi-Agent Systems (MAS) \cite{guo2024large} offer a promising alternative by coordinating multiple specialized agents to achieve superior performance compared to individual systems while maintaining manageable computational costs and budgets. Recent advances in MAS have led to the development of several frameworks. For example, the Mixture of Agents (MoA) \cite{wang2024mixture} employs multiple LLMs as proposers to iteratively refine responses, with a central aggregator delivering the final output. Although MoA has demonstrated robustness and scalability in deployment, its computational cost scales linearly with the number of agents, and significant redundancy and noise become a problem. For example, on datasets like MATH \cite{hendrycks2021measuring}, weaker models in the ensemble often interfere with the aggregator’s decisions, leading to incorrect results (see \cref{fig:math}).

\begin{figure}[h]
\vskip 0.2in
\begin{center}
\centerline{\includegraphics[width=\linewidth]{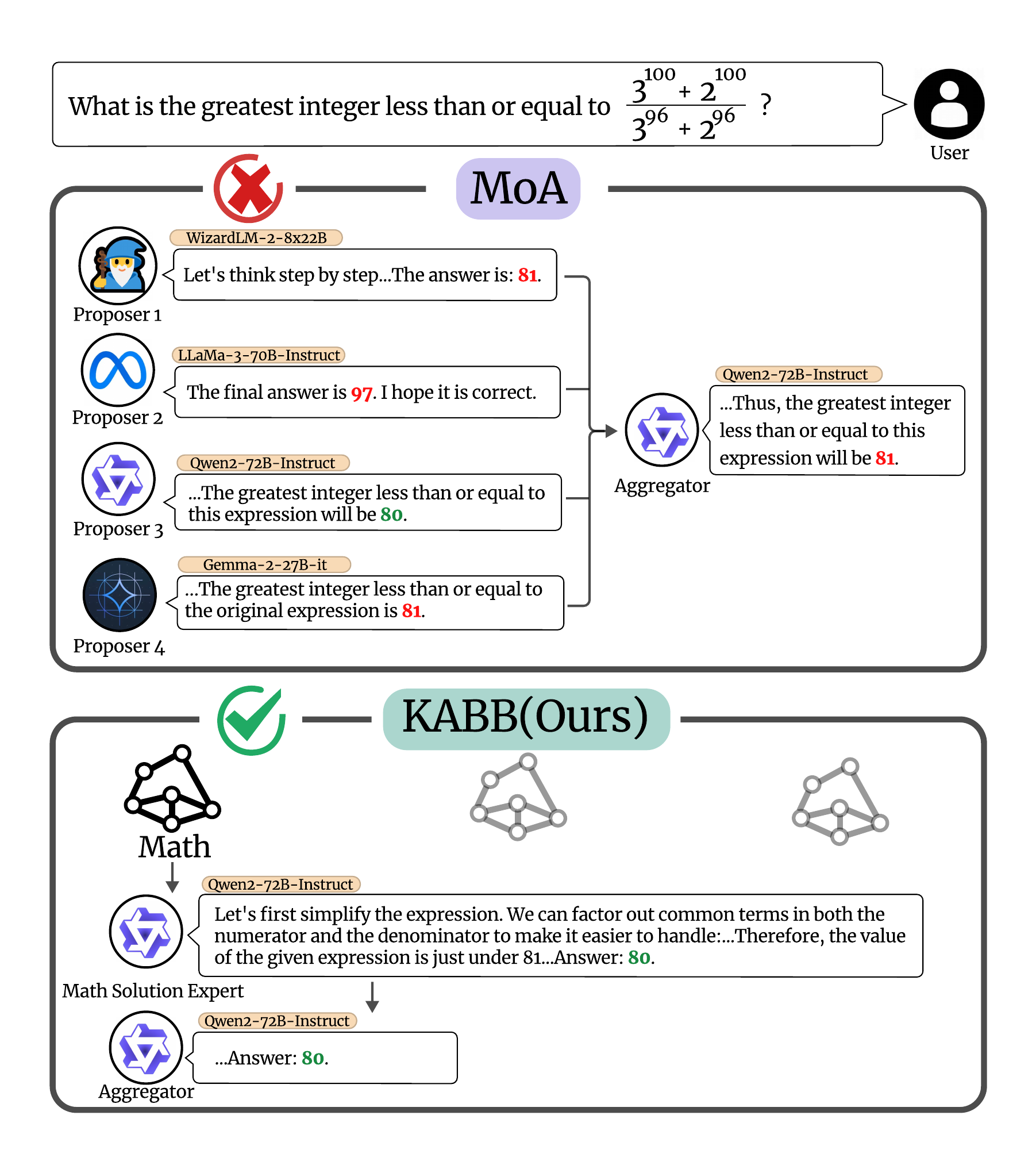}}
\caption{Comparison of MoA and KABB (Ours) on solving a mathematical problem: MoA's aggregator is misled by conflicting weaker proposals, resulting in an incorrect answer, while KABB employs a knowledge-aware approach to drive related experts and arrive at the correct solution.}
\label{fig:math}
\end{center}
\vskip -0.2in
\end{figure}

Alternatively, Mixture of Experts (MoE) frameworks \cite{gong2024large,zhang2024optimizing,wang2023fusing,tang2023medagents}, in the context of multi-agent systems, focus on fostering collaboration among domain-specific experts, enabling the integration of diverse responses across fields. This approach reduces redundancy and noise. but is often limited to predefined tasks. A fundamental limitation of both frameworks lies in their reliance on static knowledge assumptions, making them ill-suited to address dynamic changes in expert capabilities or the emergence of novel concepts. These limitations highlight deeper challenges in MAS, particularly in areas such as knowledge understanding, response integration, and dynamic adaptability.

The increasing complexity of real-world scenarios requires systems that can adaptively select relevant knowledge domains and identify the optimal combination of experts. Multi-Armed Bandit (MAB) algorithms \cite{mahajan2008multi} have emerged as a powerful tool for tackling such dynamic decision problems. By striking a balance between ``exploration'' (discovering new expert combinations) and ``exploitation'' (leveraging known successful strategies), MAB can continuously optimize system performance. However, traditional MAB approaches rely solely on historical feedback, often overlooking the semantic relationships between tasks and experts.

To bridge this gap, knowledge graphs \cite{ge2024knowledge} provide a compelling framework for representing and leveraging these semantic connections. By structuring expert capabilities and task requirements as interconnected knowledge networks, knowledge graphs enable: (1) precise modeling of dependencies across knowledge domains, (2) dynamic tracking of expert capabilities over time, and (3) identification of knowledge gaps in task-solving pathways. This structured representation not only enhances the accuracy of expert selection but also provides semantic-level guidance for response integration. Together, these advancements pave the way for more adaptive, efficient, and semantically informed multi-agent collaboration systems.


In this work, we propose the Knowledge-Aware Bayesian Bandits (KABB) framework to significantly enhance the coordination capabilities of multi-agent systems through three core innovations.
First, we introduce a three-dimensional knowledge distance model grounded in deep semantic understanding, which surpasses traditional keyword-based methods by integrating concept overlap, dependency path optimization, and dynamic historical performance evaluation. Specifically, expert capabilities and task requirements are represented as vectors, with concept overlap calculated using enhanced cosine similarity, dependency path lengths optimized through hierarchical knowledge relationships, and historical feedback dynamically adjusted via an adaptive time-decay factor. These components are unified into a comprehensive distance metric, further refined with deep learning techniques to optimize the weight parameters.

Second, we develop a dual adaptation mechanism to support continuous expert optimization and knowledge evolution. This mechanism employs Bayesian parameter updates with exponential time decay to mitigate the influence of outdated data while dynamically adjusting key metrics within the knowledge graph, such as concept overlap and historical performance. This ensures that expert capabilities remain adaptive to the evolving demands of tasks in real-time.

Finally, we design a knowledge-aware Thompson sampling strategy to improve computational efficiency in expert selection. By incorporating the knowledge distance metric into the Beta distribution sampling process, our strategy enables efficient identification of the top-k experts for dynamic decision-making. This approach demonstrated significant improvements in performance and cost efficiency on leading datasets like AlpacaEval 2.0 \cite{dubois2024length}. 
Additionally, a two-stage knowledge graph-guided response integration process ensures logical consistency by detecting semantic conflicts and enhancing contextual coherence, thus substantially reducing contradictory output.

Together, our innovations enable the KABB framework to effectively address the challenges of dynamic expert coordination, offering a scalable, adaptive, and semantically informed solution for multi-agent systems in complex real-world scenarios.

\section{Related Work}
\subsection{Large Language Model Ensemble}

The ensemble of large language models (LLMs) has emerged as an effective strategy to leverage the complementary strengths of different models and improve performance across diverse tasks. Early approaches primarily focused on combining outputs from multiple models through techniques like reranking or probability distribution averaging. For instance, \citet{jiang2023llm} proposed PAIRRANKER for pairwise output comparisons and GENFUSER for generating improved responses by synthesizing multiple candidates. Similarly, \citet{DBLP:journals/corr/abs-2404-12715} explored output fusion by averaging probability distributions, while FrugalGPT \cite{chen2023frugalgpt} introduced a cost-efficient cascading mechanism that allocates tasks dynamically across LLMs to reduce computational overhead. These methods highlight the potential of ensembling to amplify individual model capabilities while addressing computational constraints.

Beyond simple output aggregation, recent research has shifted toward more dynamic and adaptive frameworks for LLM collaboration. Mixture-of-Agents (MoA) \citet{wang2024mixture} exemplifies this trend by introducing iterative refinement processes where multiple LLMs serve distinct roles, such as generating and refining responses through multi-layered agent interactions. This approach emphasizes the importance of both diversity and performance in model selection, demonstrating that combining heterogeneous models often yields superior results compared to homogeneous ensembles. Additionally, routing-based methods, such as those proposed by \citet{wang2023fusing} and \citet{shnitzer2023large}, optimize efficiency by dynamically selecting the most suitable model for a given input, while ZOOTER \cite{lu2023routing} further refines this concept by distilling model expertise without requiring full inference for all candidates. These advancements highlight the progress in LLM ensemble techniques, focusing on efficiency and quality. Building on this, we propose a framework that integrates knowledge-aware mechanisms to improve adaptability and semantic coherence in multi-agent systems.


\subsection{Multi-Armed Bandit for Decision Optimization}

The Multi-Armed Bandit (MAB) framework balances exploration and exploitation in sequential decision-making under uncertainty. Classical algorithms like UCB and Thompson Sampling excel in recommendation and resource allocation, while Contextual Bandits and adaptive methods refine decision-making in dynamic settings \cite{li2010contextual}. Recent advances integrate Large Language Models (LLMs) to reduce learning regret and enhance decision-making by leveraging pre-trained knowledge \cite{alamdari2024jump}. Bandit-based reinforcement learning frameworks further aid retrieval in knowledge-intensive tasks \cite{tang2024mba}. Innovations in clustering and transfer learning have improved MAB efficiency across applications like clinical trials and recommendation systems \cite{qi2025graphfeedbackbanditssimilar,sharma2025offlinetoonlinehyperparametertransferstochastic}. These developments highlight the importance of semantic understanding and adaptation, aligning with the Knowledge-Aware Bayesian Bandits (KABB) framework introduced in this paper.

\section{Method}
\label{sec:method}
This chapter presents the Knowledge-Aware Bayesian Multi-Armed Bandits (KABB) framework for solving the expert selection problem in multi-agent collaborative systems. We begin by defining the problem space and identifying key gaps in classical approaches with respect to knowledge representation and dynamic adaptability. Building upon this foundation, we propose a dynamic Bayesian optimization strategy that incorporates knowledge-driven decision mechanisms, synergy-based distance metrics, and robust theoretical guarantees. Through detailed analysis and illustrative examples, we demonstrate that the KABB framework achieves both improved exploration efficiency and stronger convergence properties, thereby providing a new paradigm for multi-agent collaboration and expert team formation.

\begin{figure*}[t]
    \centering
    \includegraphics[width=\textwidth]{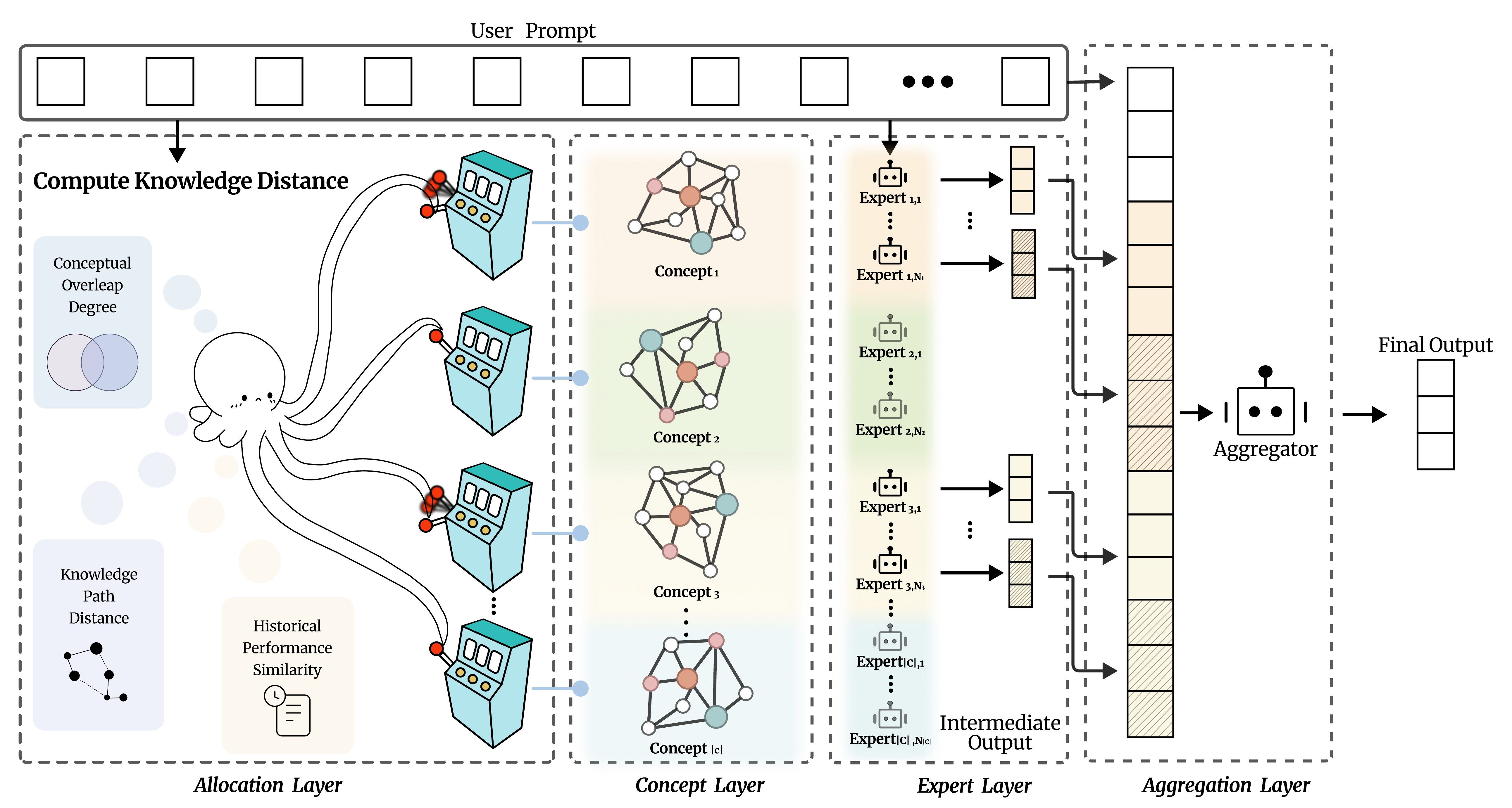}
    \caption{The KABB framework combines knowledge graph embeddings, team synergy metrics, and dynamic Bayesian MAB algorithms to enable efficient expert team selection and adaptation. In this example, the user prompt is mapped to the top-2 concepts from the set $\mathcal{C}$, and the top-4 relevant experts are selected to respond. An aggregator then synthesizes their outputs to generate the final response.}
    \label{fig:kabb-framework}
\end{figure*}

\subsection{System Architecture}
The overall decision-making process of the KABB system (see \cref{fig:kabb-framework}) consists of several key steps:
\label{sec:system_arch}
\begin{enumerate}
    \item \textbf{Task Reception and Concept Extraction}: the system receives a user-input task $T^t$ and employs natural language processing techniques to parse the task into a concept requirement \cite{1,11,111}vector $\mathbf{d}^t \in \mathbb{R}_+^{|\mathcal{C}|}$, where $\mathcal{C}$ is a predefined set of concepts.
    \item \textbf{Expert Capability Mapping}: each expert\cite{GPT2} (i.e., different LLMs\cite{yuxunl}) is represented by an ability vector $\mathbf{v}_e \in \mathbb{R}_+^{|\mathcal{C}|}$, reflecting its expertise across various concepts. Multiple LLMs are thus mapped into an expert set $\mathcal{E} = \{e_1, e_2, \ldots, e_n\}$.
    \item \textbf{Expert Subset Selection}: The optimal expert subset $\mathcal{S}_t \subseteq \mathcal{E}$ is identified through a knowledge-aware Thompson sampling process that leverages both the task requirement vector $\mathbf{d}^t$ and expert capability vectors $\mathbf{v}_e$. This process integrates a dynamic Bayesian MAB algorithm with the knowledge distance metric $\text{Dist}(\mathcal{S}, t)$ to maximize task success probability. Selected experts in $\mathcal{S}_t$ independently process task $T^t$, after which an aggregator synthesizes their responses through semantic conflict detection and weighted information fusion to generate the final output.
    \item \textbf{Performance Feedback and Model Update}: The system collects performance metrics (e.g., success rates and user ratings) for each task completion. These feedback signals are used to update the Bayesian model parameters $\alpha$ and $\beta$, enhancing the accuracy and adaptability of future decisions.
\end{enumerate}

Through this pipeline, the KABB system achieves a closed-loop process from task parsing to expert selection and answer aggregation, ensuring precise alignment between task requirements and expert capabilities while continuously improving decision-making efficiency and effectiveness.

\subsection{Knowledge Distance and Complementarity in Multi-Agent Teams}
\label{sec:knowledge_distance}

To better characterize the collaborative properties of multi-agent\cite{duoagent} (expert \cite{moe}subset) teams, we extend the knowledge distance metric from individual experts to expert subsets, introducing the concepts of team synergy and conflict\cite{biaozhen}. The knowledge distance \cite{qi2025graphfeedbackbanditssimilar}metric $\text{Dist}(\mathcal{S}, t)$ serves as a core component of the KABB model, integrating five key dimensions of information: task difficulty, semantic matching, dependency relations, team complementarity, and historical effectiveness. These dimensions are balanced through learnable weights. The formal definition is given as follows:\begin{definition}[Knowledge Distance Function]
The knowledge distance metric $\text{Dist}(\mathcal{S}, t)$ integrating five dimensions is formally defined as:

\begin{equation}
\scriptsize
\begin{aligned}
\text{Dist}(\mathcal{S}, t) &= \underbrace{\log(1 + d_t)}_{\text{difficulty scaling}} \cdot \Bigg[ 
\omega_1\underbrace{\left(1 - \rho_{\text{overlap}}(\mathcal{S}, t)\right)}_{\text{semantic mismatch}} 
+ \omega_2\underbrace{\frac{|\mathcal{R}_{\text{dep}}(\mathcal{S}, t)|}{K}}_{\text{dependency complexity}} \\
&\quad + \omega_3\underbrace{\left(1 - \bar{H}_{\mathcal{S}}(t)\right)}_{\text{historical effectiveness}} 
+ \omega_4\underbrace{\left(1 - \mathrm{Synergy}(\mathcal{S})\right)}_{\text{team complementarity}} \Bigg] 
\end{aligned}
\tag{4}
\label{eq:emc4}
\end{equation}
where $d_t$ is the task difficulty coefficient based on knowledge graph topology depth, $\omega = [\omega_1, \omega_2, \omega_3, \omega_4]$ are learnable weight parameters satisfying $\sum_{i=1}^4 \omega_i = 1$, $\rho_{\text{overlap}}(\mathcal{S}, t) = \frac{|\mathcal{C}_{\mathcal{S}} \cap \mathcal{C}_t|}{|\mathcal{C}_{\mathcal{S}} \cup \mathcal{C}_t|}$ is the Jaccard similarity between the expert subset $\mathcal{S}$ and task $t$, $|\mathcal{R}_{\text{dep}}(\mathcal{S}, t)|$ is the number of dependency edges between expert subset and task in knowledge graph, $K = |\mathcal{E}|$ is total expert count, $\bar{H}_{\mathcal{S}}(t)$ is average historical success rate of expert subset, and $\mathrm{Synergy}(\mathcal{S}) \in [0,1]$ quantifies team complementarity, where higher values indicate stronger collaboration and less conflict within the team.
\end{definition}

The following theorem ensures the consistency and rationality of knowledge distance when measuring multi-agent team collaboration, thereby enhancing the reliability and effectiveness of the model in expert selection and task allocation.

\begin{theorem}[Pseudo-Metric Properties of Knowledge Distance]
\label{the:Pseudo-Metric}
The knowledge distance function $\text{Dist}(\mathcal{S}, t)$ satisfies the following pseudo-metric properties:

\begin{itemize}
\setlength{\itemsep}{0pt}
\setlength{\parsep}{0pt}
\setlength{\parskip}{0pt}
\item \textbf{Non-negativity}: For any expert subset $\mathcal{S}$ and task $t$, $\text{Dist}(\mathcal{S}, t) \geq 0$.  


\item \textbf{Conditional Symmetry}: If the dependency graph $G$ is undirected and $\rho_{\text{overlap}}(\mathcal{S}_1, t) = \rho_{\text{overlap}}(\mathcal{S}_2, t)$, and if $\mathcal{S}_1$ and $\mathcal{S}_2$ are symmetric in terms of knowledge and dependencies, then $\text{Dist}(\mathcal{S}_1, t) = \text{Dist}(\mathcal{S}_2, t)$.  


\item \textbf{Approximate Triangle Inequality}: There exists a constant $c \geq 1$ such that  
\[
\text{Dist}(\mathcal{S}_1, t) \leq c\left[\text{Dist}(\mathcal{S}_1, \mathcal{S}_2) + \text{Dist}(\mathcal{S}_2, t)\right].
\]  


\end{itemize}
\end{theorem}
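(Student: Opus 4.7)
I would handle the three properties in order of increasing difficulty, treating each term in the bracketed sum of \eqref{eq:emc4} as a separate component and then recombining. Throughout I would use the hypotheses $\omega_i \ge 0$, $\sum_i \omega_i = 1$, $d_t \ge 0$, and the fact that $\rho_{\text{overlap}}$, $\bar H_{\mathcal{S}}(t)$, $\mathrm{Synergy}(\mathcal{S})$ all lie in $[0,1]$ by construction.

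\textbf{Non-negativity.} First I would observe that $\log(1+d_t) \ge 0$ because $d_t \ge 0$, and that each of the four bracketed summands is non-negative: $1-\rho_{\text{overlap}}\ge 0$ and $1-\bar H_{\mathcal{S}}(t)\ge 0$ and $1-\mathrm{Synergy}(\mathcal{S})\ge 0$ by the range assumptions, while $|\mathcal{R}_{\text{dep}}(\mathcal{S},t)|/K\ge 0$ trivially. A product of non-negative quantities is non-negative, giving $\text{Dist}(\mathcal{S},t)\ge 0$.

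\textbf{Conditional symmetry.} The idea is that under the stated hypotheses every component of the bracketed sum evaluated at $\mathcal{S}_1$ equals the corresponding component at $\mathcal{S}_2$. The assumption $\rho_{\text{overlap}}(\mathcal{S}_1,t) = \rho_{\text{overlap}}(\mathcal{S}_2,t)$ handles the first term directly. The symmetry in knowledge and dependencies, together with the dependency graph being undirected, should be interpreted as $|\mathcal{R}_{\text{dep}}(\mathcal{S}_1,t)|=|\mathcal{R}_{\text{dep}}(\mathcal{S}_2,t)|$, $\bar H_{\mathcal{S}_1}(t)=\bar H_{\mathcal{S}_2}(t)$, and $\mathrm{Synergy}(\mathcal{S}_1)=\mathrm{Synergy}(\mathcal{S}_2)$; I would state these as explicit consequences of the informal hypothesis and then substitute into \eqref{eq:emc4} to conclude. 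Since $d_t$ depends only on $t$, the leading log factor is identical, completing the equality.

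\textbf{Approximate triangle inequality.} This is the main obstacle, because $\text{Dist}(\mathcal{S}_1,\mathcal{S}_2)$ must first be given meaning by extending the definition to a pair of subsets (replacing the task role of $t$ by $\mathcal{S}_2$ and its associated concept set/difficulty). My plan is a component-wise reduction: the semantic-mismatch piece $1-\rho_{\text{overlap}}$ is the Jaccard distance, which is a genuine metric and hence satisfies the exact triangle inequality with $c=1$; the dependency-complexity piece $|\mathcal{R}_{\text{dep}}(\mathcal{S},t)|/K$ can be bounded by a path-concatenation argument on the knowledge graph, giving $|\mathcal{R}_{\text{dep}}(\mathcal{S}_1,t)|\le |\mathcal{R}_{\text{dep}}(\mathcal{S}_1,\mathcal{S}_2)|+|\mathcal{R}_{\text{dep}}(\mathcal{S}_2,t)|$; the historical-effectiveness and synergy terms are bounded in $[0,1]$ and can be absorbed into a constant since $1-x\le 2(1-y)+2(1-z)$ whenever $x\ge y\cdot z$ after a short estimate. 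Combining the four terms by convexity of the weighted sum yields an inequality of the required form with $c$ no larger than $2\max_i \omega_i/\min_i\omega_i$ times the ratio of the log-difficulty factors, which I would absorb into a single finite constant by using $\log(1+d_t)\le \log(1+d_{\mathcal{S}_2})+\log(1+d_t)$ and boundedness of the graph diameter. The delicate step will be justifying the historical and synergy decompositions, since neither is intrinsically metric; I would address this by appealing to their bounded range and choosing $c$ large enough to absorb worst-case slack.
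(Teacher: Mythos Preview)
Your proposal is correct and follows the same overall strategy as the paper: treat non-negativity and conditional symmetry by direct inspection of each term, and handle the approximate triangle inequality by decomposing into the four bracketed components, establishing a relaxed triangle-type bound for each, and recombining with a global constant that absorbs the weights and the log-difficulty factor.

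The one place where your route differs from the paper's is the semantic-mismatch term. You invoke the fact that the Jaccard distance $1-\rho_{\text{overlap}}$ is already a genuine metric on concept sets, so the exact triangle inequality holds with constant $1$. The paper instead proves a separate sub-additivity lemma of the form
\[
1-\rho_{\text{overlap}}(\mathcal{S}_1\cup\mathcal{S}_2,t)\;\le\;2\bigl[(1-\rho_{\text{overlap}}(\mathcal{S}_1,t))+(1-\rho_{\text{overlap}}(\mathcal{S}_2,t))\bigr],
\]
which is about unions rather than an intermediate point, and then folds this into the global constant with $c_1=2$. Your observation is cleaner and more directly matched to the triangle-inequality form actually stated in the theorem. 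For the dependency term both you and the paper use path concatenation in the graph metric (the paper takes $c_2=\mathrm{diam}(G)$). For the synergy and historical terms both treatments fall back on boundedness in $[0,1]$; the paper records $c_3=2$ and $c_4=1$ and then sets $c=\max_i c_i\cdot\omega_i\cdot\log(1+\overline{D}_{\max})$, which is the same in spirit as your weight-ratio and bounded-diameter absorption. One small caution: your inequality $\log(1+d_t)\le\log(1+d_{\mathcal{S}_2})+\log(1+d_t)$ is a tautology and does no work; what you actually need (and what the paper also does implicitly) is to bound the \emph{ratio} of the two log factors by a constant depending on the maximal graph depth, which is what your appeal to bounded diameter delivers.
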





By incorporating team complementarity, the knowledge distance measures not only external team-task matching but also internal team synergy, enabling multi-dimensional adaptability assessment.

\subsection{Dynamic Bayesian Multi-Armed Bandit (MAB) Algorithm Derivation for Multi-Agent Systems}
\label{sec:dynamic_bayesian}

To effectively select the most suitable expert subset for specific tasks in expert systems remains a key challenge. Traditional MAB algorithms (e.g., UCB\cite{duobi1,duobi2}, Thompson Sampling) rely solely on historical feedback for decision-making. However, these methods face two significant limitations in practice: (1) they fail to account for the dynamic nature of expert performance over time, and (2) they overlook the critical alignment between task requirements and the knowledge structure of expert teams. To address these issues, we propose a Dynamic Bayesian MAB framework that integrates knowledge distance metrics, team complementarity, and temporal decay mechanisms into Bayesian inference. This framework establishes a joint optimization objective, enabling dynamic adjustment of expert subset selection strategies. As a result, the system can rapidly adapt to changes in expert performance while identifying the best-matched expert teams for incoming tasks.

\textbf{Dynamic Beta Distribution Modeling and Parameter Evolution.} We model the success probability of an expert subset $\mathcal{S}$ at time step $t$ using a time-varying Beta distribution: 
\[
\theta_{\mathcal{S}}^{(t)} \sim \text{Beta}\left( \alpha_{\mathcal{S}}^{(t)}, \beta_{\mathcal{S}}^{(t)} \right),
\]
where the parameters are updated dynamically according to the following equations:

\begin{equation}
\scriptsize
\begin{cases}
\alpha_{\mathcal{S}}^{(t+1)} = \underbrace{\gamma^{\Delta t} \alpha_{\mathcal{S}}^{(t)}}_{\text{historical decay}} 
+ \underbrace{r_{\mathcal{S}}^{(t)}}_{\text{immediate feedback}} 
+ \underbrace{\delta \cdot \mathrm{KM}(\mathcal{S}, t)}_{\text{knowledge matching reward}} \\[8pt]
\beta_{\mathcal{S}}^{(t+1)} = \gamma^{\Delta t} \beta_{\mathcal{S}}^{(t)} 
+ \left(1 - r_{\mathcal{S}}^{(t)}\right) 
+ \delta \cdot \left(1 - \mathrm{KM}(\mathcal{S}, t)\right)
\end{cases} 
\tag{5} 
\label{eq:emc5}
\end{equation}

Here $\mathrm{KM}(\mathcal{S}, t) = \overbrace{\rho_{\text{overlap}}}^{\text{semantic matching}} \cdot \underbrace{\mathrm{Synergy}(\mathcal{S})}_{\text{synergy gain}}$ is composite knowledge matching index, $\gamma^{\Delta t} = e^{-\kappa \Delta t}$ ($\kappa > 0$) is exponential time decay factor, and $\delta$ represents prior distribution correction strength per unit knowledge matching.

\textbf{Joint Knowledge-Time-Team Sampling Strategy.} To guide the expert subset selection, we define a comprehensive confidence function $\tilde{\theta}_{\mathcal{S}}^{(t)}$, which incorporates historical performance, knowledge distance, time decay, and team synergy:

\begin{equation}
\resizebox{\linewidth}{!}{$%
\begin{aligned}
\tilde{\theta}_{\mathcal{S}}^{(t)} &= \underbrace{\mathbb{E}\left[\theta_{\mathcal{S}}^{(t)}\right]}_{\text{historical expectation}} 
\cdot \exp\biggl( -\lambda \cdot \overbrace{\text{Dist}(\mathcal{S}, t)}^{\text{knowledge distance}} \biggr) 
\cdot \underbrace{\gamma^{\Delta t}}_{\text{time decay}} 
\cdot \overbrace{\mathrm{Synergy}(\mathcal{S})^\eta}^{\text{synergy effect}} \\[8pt]
&= \left( \frac{\alpha_{\mathcal{S}}^{(t)}}{\alpha_{\mathcal{S}}^{(t)} + \beta_{\mathcal{S}}^{(t)}} \right) 
\cdot \exp\left( -\lambda \cdot \left[ \log(1 + d_t) \cdot \sum_{i=1}^4 \omega_i \Psi_i \right] \right) 
\cdot e^{-\kappa \Delta t} 
\cdot \left( \frac{\sum_{e_i,e_j \in \mathcal{S}} \mathcal{C}_{\text{syn}}(e_i,e_j)}{|\mathcal{S}|(|\mathcal{S}|-1)} \right)^\eta 
\end{aligned}
$}
\tag{6} 
\label{eq:emc6}
\end{equation}

where $\mathbb{E}[\theta_{\mathcal{S}}^{(t)}]$ is the Beta distribution expectation, reflecting the team's historical performance, $\exp\left(-\lambda \cdot \log(1 + d_t) \cdot \sum_{i=1}^4 \omega_i \Psi_i \right)$ is the knowledge distance penalty, $\Psi_i$ are the four sub-indicators defined in Equation~\eqref{eq:emc4}, and $\mathrm{Synergy}(\mathcal{S}) = \frac{1}{|\mathcal{S}|(|\mathcal{S}|-1)} \sum_{e_i,e_j \in \mathcal{S}} \mathcal{C}_{\text{syn}}(e_i,e_j)$ is the synergy effect quantifying team collaboration via the synergy gain coefficient $\mathcal{C}_{\text{syn}}$.



\textbf{Convergence Analysis of Dynamic Selection Strategy}

\begin{theorem}[$\epsilon$-Approximate Optimal Convergence]
For any $\epsilon > 0$, there exists parameter configuration $(\lambda^*, \eta^*, \gamma^*)$ such that algorithm's cumulative regret within $T$ steps satisfies:

\begin{equation}
\mathcal{R}(T) = \sum_{t=1}^T \left[ \theta_{\mathcal{S}^*}^{(t)} - \theta_{\mathcal{S}_t}^{(t)} \right] \leq \epsilon T + \mathcal{O}\left( \sqrt{T \log T} \right)
\tag{7}
\label{eq:emc7}
\end{equation}

\end{theorem}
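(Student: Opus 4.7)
The plan is to decompose the cumulative regret into a \emph{knowledge-bias} component plus a \emph{sampling-variance} component, and then tune the parameter triple $(\lambda^*,\eta^*,\gamma^*)$ so that the bias is driven below $\epsilon$ per step while the variance admits a classical Thompson-sampling style bound of order $\sqrt{T\log T}$. Concretely, I would write, for each round $t$,
\begin{equation*}
\theta_{\mathcal{S}^*}^{(t)}-\theta_{\mathcal{S}_t}^{(t)}=\underbrace{\bigl(\theta_{\mathcal{S}^*}^{(t)}-\tilde\theta_{\mathcal{S}^*}^{(t)}\bigr)+\bigl(\tilde\theta_{\mathcal{S}_t}^{(t)}-\theta_{\mathcal{S}_t}^{(t)}\bigr)}_{\text{bias from knowledge weighting}}+\underbrace{\bigl(\tilde\theta_{\mathcal{S}^*}^{(t)}-\tilde\theta_{\mathcal{S}_t}^{(t)}\bigr)}_{\text{Thompson regret under }\tilde\theta},
\end{equation*}
and then analyze the two groups separately.

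For the bias term I would exploit the explicit form of $\tilde\theta_{\mathcal{S}}^{(t)}$ in \eqref{eq:emc6}: since the multiplicative correction is $\exp(-\lambda\,\mathrm{Dist}(\mathcal{S},t))\cdot\gamma^{\Delta t}\cdot\mathrm{Synergy}(\mathcal{S})^\eta$, one can Taylor-expand $\exp(-\lambda\,\mathrm{Dist})=1-\lambda\,\mathrm{Dist}+O(\lambda^2)$ and bound $\mathrm{Dist}(\mathcal{S},t)$ uniformly using Theorem~\ref{the:Pseudo-Metric} (non-negativity and the approximate triangle inequality give a finite constant $D_{\max}$). Taking $\lambda^*=\Theta(\epsilon/D_{\max})$, $\eta^*$ small enough that $1-\mathrm{Synergy}^{\eta^*}\le\epsilon/3$, and $\gamma^*$ close enough to $1$ so that $1-\gamma^{*\,\Delta t}\le\epsilon/3$ on the effective horizon, forces the per-step bias into $[-\epsilon,\epsilon]$, contributing at most $\epsilon T$ to the cumulative regret.

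For the Thompson-regret term I would follow the Agrawal--Goyal posterior-concentration route adapted to Beta priors: define the good event that $\alpha^{(t)}_\mathcal{S}/(\alpha^{(t)}_\mathcal{S}+\beta^{(t)}_\mathcal{S})$ lies within $O(\sqrt{\log T/N_\mathcal{S}(t)})$ of the true mean $\theta_\mathcal{S}$, where $N_\mathcal{S}(t)$ is the effective (decay-weighted) sample count. A union bound over the combinatorial arm set $\binom{|\mathcal{E}|}{k}$ costs only a $\log T$ factor. The decay $\gamma^{\Delta t}$ makes the process non-stationary, so I would replace $N_\mathcal{S}(t)$ by the geometric-weighted count $\sum_{s\le t}\gamma^{t-s}\mathbb{1}[\mathcal{S}_s=\mathcal{S}]$ and invoke Freedman's inequality for martingale differences with exponential weights, yielding the standard $\mathcal{O}(\sqrt{T\log T})$ bound on $\sum_t(\tilde\theta_{\mathcal{S}^*}^{(t)}-\tilde\theta_{\mathcal{S}_t}^{(t)})$ on the good event, while the bad event contributes $o(1)$.

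The main obstacle I expect is reconciling the non-stationarity introduced by $\gamma^{\Delta t}$ with the usual stationary Thompson-sampling analysis: the effective prior strength shrinks over time, so one must argue that the posterior still concentrates at the right rate, and that the optimal subset $\mathcal{S}^*$ does not drift too fast relative to the decay window. A secondary difficulty is the combinatorial action space: naive enumeration gives $\binom{|\mathcal{E}|}{k}$ factors, so I would couple nearby subsets via the pseudo-metric of Theorem~\ref{the:Pseudo-Metric} and a covering-number argument so that the effective arm count in the regret constant is polynomial in $|\mathcal{E}|$ rather than exponential. Once these two technicalities are handled, summing the $\epsilon T$ bias and the $\mathcal{O}(\sqrt{T\log T})$ variance yields the claimed bound.
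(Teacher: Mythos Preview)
Your proposal is a reasonable and internally coherent route, but it is \emph{not} the one the paper sketches. The paper's argument for this theorem proceeds in three different steps: (i) analyze the stability of the Beta-parameter evolution via KL divergence between successive posteriors; (ii) construct a Lyapunov function
\[
V(t)=\sum_{\mathcal{S}}\bigl[(\alpha_{\mathcal{S}}^{(t)}-\alpha_{\mathcal{S}^*}^{(t)})^2+(\beta_{\mathcal{S}}^{(t)}-\beta_{\mathcal{S}^*}^{(t)})^2\bigr]
\]
and show it is eventually decreasing, which gives convergence of the posterior parameters toward those of the optimal subset; and (iii) derive the cumulative-regret bound from UCB-style principles, using a per-step decomposition of the form $\Delta_{\mathcal{S}}^{(t)}\le|\hat\mu_{\mathcal{S}^*}^{(t)}-\theta_{\mathcal{S}^*}^{(t)}|+|\hat\mu_{\mathcal{S}}^{(t)}-\theta_{\mathcal{S}}^{(t)}|+\lambda\cdot\mathrm{Dist}(\mathcal{S},t)$ together with a Chernoff--Hoeffding bound on the number of suboptimal pulls. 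By contrast, you split the regret into a knowledge-weighting bias (controlled to $\epsilon$ per step by tuning $\lambda^*,\eta^*,\gamma^*$ via a first-order expansion of the multiplicative correction) plus a residual Thompson-sampling regret handled by the Agrawal--Goyal posterior-concentration framework with Freedman's inequality for the decay-weighted counts. Your decomposition has the virtue of making the origin of the $\epsilon T$ term completely transparent---it is exactly the price of the knowledge-aware distortion of the sampling score---whereas the paper's Lyapunov construction buys a more direct handle on the non-stationary Beta dynamics (the very difficulty you flag as your main obstacle) without needing a separate martingale argument for the decaying effective sample size. Both routes are sketches rather than full proofs; yours is closer to the modern bandit-analysis canon, while the paper's leans on control-theoretic stability machinery.
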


\section{Experiments}

In this section, we detail the experimental setup, present the main results, and provide an in-depth analysis of KABB.

\subsection{Experimental Setup}
\label{sec:setup}
\textbf{Models.} To construct the default configuration of KABB, we use 6 open-source models\footnote{Inference was conducted using the Together Inference Endpoint: \href{https://api.together.ai/playground/chat}{https://api.together.ai/playground/chat}.} 
including Qwen2-72B-Instruct \cite{bai2023qwen}, LLaMa-3-70B-Instruct \cite{adams2024llama}, WizardLM-2-8x22B \cite{xu2024wizardlm}, Gemma-2-27B \cite{team2024gemma}, Deepseek-V3 \cite{liu2024deepseek}, and Deepseek-R1 \cite{guo2025deepseek}. 12 knowledge concepts and 24 experts are defined, and the models are evenly distributed across these experts using tailored prompts to specialize their expertise, resulting in a straightforward yet effective multi-agent system. By default, the system dynamically routes queries to top-3 experts from top-2 knowledge concepts. Following the insights from MoA \cite{wang2024mixture}, we designated Qwen2-72B-Instruct as the aggregator. Two variants are also developed: KABB w/o Deepseek, which excludes the Deepseek-V3 and Deepseek-R1 models from the system, and KABB-Single-LLaMa3, which employs only LLaMa-3-70B-Instruct as both the experts and the aggregator.

\textbf{Benchmarks.} The evaluation mainly uses AlpacaEval 2.0 \cite{dubois2024length} with 805 instructions that reflect real-world cases. The model outputs are directly compared to those of the GPT-4 Preview (11/06), with a GPT-4-based evaluator determining the preference probabilities. The length-controlled (LC) win rate is adopted to eliminate potential length biases\footnote{This metric closely approximates human judgment, boasting a Spearman correlation of 0.98 when compared to actual human evaluations \cite{dubois2024length}.}. We also assess performance on MT-Bench \cite{zheng2023judging} and FLASK-Hard \cite{ye2023flask}. FLASK-Hard, the 89 most difficult instances in FLASK, provides a detailed evaluation of 12 skill-specific categories. 
For reasoning and problem-solving tasks, the results on Arena-Hard, MATH, and BBH are reported in Appendix \ref{app:reasoning}.

\subsection{Main Results}

We analyze the performance of KABB and its variants across AlpacaEval 2.0, MT-Bench, and FLAS-Hard. A detailed comparison with baseline models and their ablations provides insights into its effectiveness and robustness.

\textbf{AlpacaEval 2.0} focuses on measuring alignment with human preferences. The results, as shown in \cref{tab:model_comparison}, highlight that KABB achieves a leading LC win rate of 77.9\%, marking a 9.8\% improvement over MoA under the same configuration. It is noteworthy that KABB selects only 2 experts to respond to instruction, while MoA requires 6 proposers, which shows the cost efficiency of KABB. Although KABB does not surpass Deepseek-R1 (80.1\%), this is expected, as not all responses in the system involve Deepseek contributions. Importantly, KABB w/o Deepseek outperforms both the open-source models inside the system and proprietary models including GPT-4 Omni. Similarly, KABB-Single-LLaMa3 surpasses LLaMa-3-70B-Instruct, illustrating that collaboration and specialization in KABB enhance overall performance. 
These results confirm that its ability to dynamically route queries to specialized experts and aggregate their responses effectively contributes to this strong alignment.

\textbf{MT-Bench.} KABB achieves a state-of-the-art average score of 9.60, maintaining top-tier performance in multi-turn dialogue. KABB w/o Deepseek (9.47) exceeds GPT-4 Turbo (9.31). While individual models already perform exceptionally well on this benchmark, KABB’s collaborative design with dynamic expert routing secures a leadership position, reinforcing its robustness in multi-turn interactions.

\textbf{FLASK-Hard.} KABB demonstrates strong performance in 12 skill-specific metrics (see \cref{falsk-hard}), surpassing or matching MoA and GPT-4 in two-thirds of the categories, particularly robustness, correctness, common sense, insight, metacognition, and readability. Notably, KABB outperforms MoA in metacognition, reflecting its ability to reason and adapt effectively. However, KABB lags slightly in conciseness, producing more detailed outputs. This trade-off highlights KABB's emphasis on thoroughness over brevity.

\begin{figure}[h]
\begin{center}
\centerline{\includegraphics[width=\linewidth]{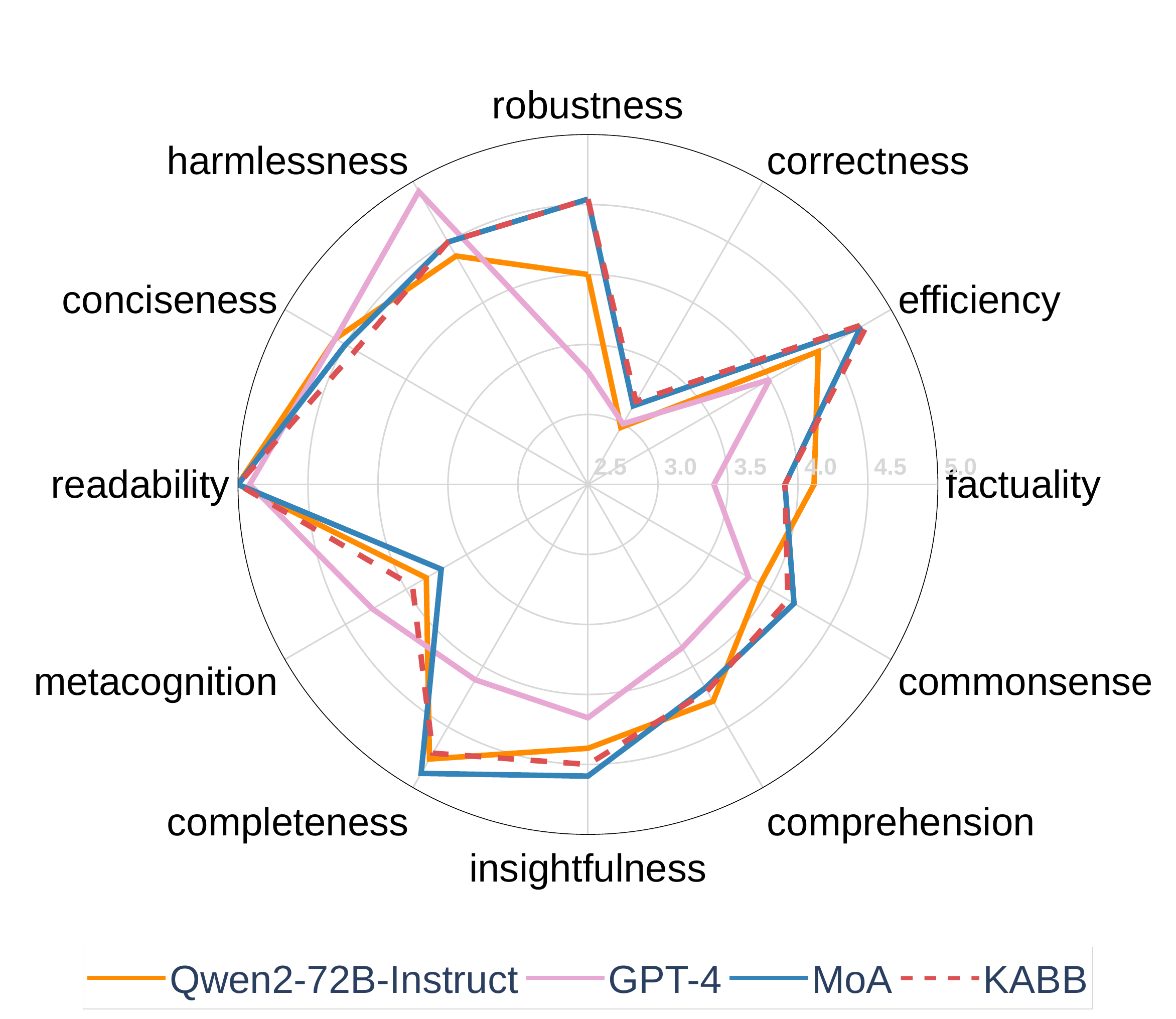}}
\caption{Results on FLASK-Hard where we use the default KABB setup with 6 models and Qwen2-70B-Instruct as the aggregator. We include the results of GPT-4, Qwen2-72B-Instruct, and MoA with the same 6 proposers and aggregator for comparison.}
\label{falsk-hard}
\end{center}
\vskip -0.3in
\end{figure}

\begin{table*}[t]
    \centering
    \small
    \sisetup{
        table-format=2.1,
        separate-uncertainty=true,
        detect-weight=true,
        detect-inline-weight=math
    }
    \renewcommand{\arraystretch}{1.2} 
    \setlength{\tabcolsep}{12pt} 
    \resizebox{\textwidth}{!}{
    \begin{tabular}{lcccccc}
        \toprule
         & \multicolumn{2}{c}{AlpacaEval 2.0} & \multicolumn{3}{c}{MT-Bench} \\
        \cmidrule(lr){2-3} \cmidrule(lr){4-6}
        Model & \textbf{LC win. (\%)} & \textbf{win. (\%)} & \textbf{Avg.} & \textbf{1st turn} & \textbf{2nd turn} \\
        \midrule
        \rowcolor[gray]{0.95} KABB & \underline{77.9} & \underline{72.3} & \textbf{9.65} & \textbf{9.85} & \textbf{9.45} \\
        MoA & 68.1 & 65.4 & 9.41 & 9.53 & 9.29 \\
        \rowcolor[gray]{0.95} KABB w/o Deepseek & 62.4 & 66.7 & 9.47 & 9.58 & 9.35 \\
        GPT-4 Omni (05/13) & 57.5 & 51.3 & 9.19 & 9.31 & 9.07 \\
        GPT-4 Turbo (04/09) & 55.0 & 46.1 & 9.31 & 9.35 & 9.28 \\
        GPT-4 Preview (11/06) & 50.0 & 50.0 & 9.20 & 9.38 & 9.03 \\
        GPT-4 (03/14) & 35.3 & 36.1 & 8.84 & 9.08 & 8.61 \\
        Qwen2-72B-Instruct & 38.1 & 29.9 & 9.15 & 9.25 & 9.05 \\
        Gemma-2-27B & 44.9 & 33.2 & 9.09 & 9.23 & 8.95 \\
        WizardLM-2-8x22B & 51.3 & 62.3 & 8.78 & 8.96 & 8.61 \\
        \rowcolor[gray]{0.95} KABB-Single-LLaMa3 & 34.7 & 36.2 & 9.16 & 9.10 & 9.23 \\
        LLaMa-3-70B-Instruct & 34.4 & 33.2 & 8.94 & 9.20 & 8.68 \\
        Deepseek-V3 & 67.2 & 69.3 & 9.51 & 9.59 & 9.42 \\
        Deepseek-R1 & \textbf{80.1} & \textbf{75.4} & 9.30 & 9.40 & 9.20 \\
        \bottomrule
    \end{tabular}
    }
    \caption{Comparison of different models on AlpacaEval 2.0 and MT-Bench. MoA (with 2 layers) shares the same model configuration with KABB, where 6 different proposers are in the first layer and 1 aggregator in the second. For AlpacaEval 2.0, the performance of GPT-4 variants, LLaMa-3-70B-Instruct, and Qwen2-72B-Instruct on AlpacaEval 2.0 are sourced from public leaderboards; WizardLM-2-8x22B results come from \cite{wang2024mixture}. We reproduced results for Deepseek-V3, Deepseek-R1, and Gemma-2-27B on AlpacaEval 2.0. For MT-Bench, we conducted evaluations to obtain turn-based scores, except for the results of GPT-4 variants, LLaMa-3-70B-Instruct, and WizardLM-2-8x22B, which are from \cite{wang2024mixture}.}
    \label{tab:model_comparison}
\end{table*}
\subsection{WHAT MAKES KABB EFFECTIVE?}
We analyze KABB's effectiveness by comparing different routing strategies. 

We replaced our Knowledge-Aware (KA) routing mechanism with a classifier-based routing (CL) approach. To be specific, We replaced our Knowledge-Aware (KA) routing mechanism with a classifier-based routing (CL) approach. The CL mechanism uses Sentence-BERT to encode both the instruction and the expert’s knowledge concept into vector representations. Cosine similarity is then calculated between these vectors, and the expert with the highest similarity score is selected. 

Several optimization algorithms including PPO \cite{schulman2017proximalpolicyoptimizationalgorithms}, MCTS \cite{_wiechowski_2022}, and A2C \cite{DBLP:journals/corr/MnihBMGLHSK16} are also compared with our MAB algorithms. 

For a more nuanced evaluation that considers both the human preference for routing decisions and the relative performance advantage of the chosen experts, we introduce two new metrics: Routing Alignment Score (RAS) for human annotation consistency and Preference-Weighted Routing Score (PWRS) incorporating output quality with human preference. Detailed definitions are provided in \cref{app:comparison}.
As shown in Table \ref{tab:method_comparison}, the KA mechanism with MAB achieves the best overall performance, demonstrating strong alignment with human preferences and expert output quality. Among optimization methods, MAB consistently outperforms PPO, MCTS, and A2C, underscoring its effectiveness in balancing exploration and exploitation. KA with MAB also outperforms CL by a notable margin. This demonstrates that incorporating knowledge-awareness is critical for achieving optimal alignment with human preferences and expert output quality.
\begin{table}[t]
    \centering
    \begin{tabular}{lccc}
        \toprule
        Method & LC win. & RAS & PWRS \\
        \midrule
        \textbf{KA (MAB) (Ours)} & \textbf{62.4} & \textbf{94.16} & \textbf{60.19} \\
        CL (MAB) & 60.9 & 92.92 & 57.34 \\
        KA (A2C) & 60.2 & 91.61 & 54.38 \\
        KA (PPO) & 57.3 & 90.43 & 56.07 \\
        KA (MCTS) & 54.8 & 87.95 & 51.74 \\
        \bottomrule
    \end{tabular}
    \caption{Comparison of different methods on LC win rate of AlpacaEval 2.0, RAS, and PWRS metrics. All experiments were conducted on AlpacaEval 2.0. The system dynamically routes queries to the top-2 experts derived from the top-2 knowledge concepts. All model configurations align with the KABB w/o Deepseek (see \cref{sec:setup}).}
    \label{tab:method_comparison}
\end{table}
\subsection{Budget and Consumption Analysis}
\begin{figure}[h]
\begin{center}
\centerline{\includegraphics[width=\linewidth]{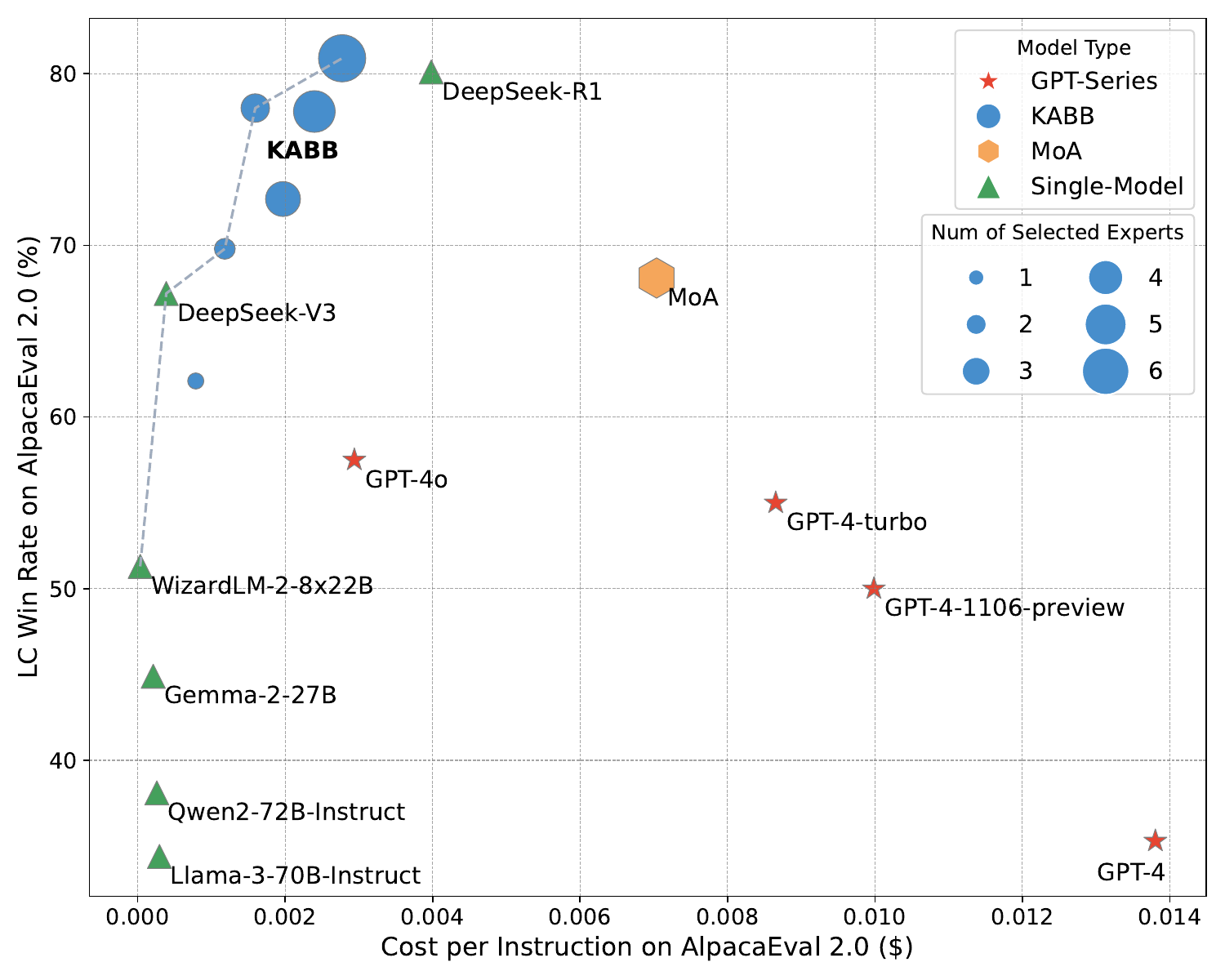}}
\caption{Performance trade-off versus cost. Our experiments use default configurations to evaluate KABB’s average cost per instruction on AlpacaEval 2.0, calculated from expert routing statistics and public API pricing\protect\footnotemark. By routing instructions to specific experts rather than all models, KABB effectively lowers costs. For instance, even with expensive models like DeepSeek-R1, unsuitable instructions are directed to cheaper experts, optimizing both cost and performance.}
\label{cost}
\end{center}
\end{figure}

\textbf{Cost Effectiveness.} In Figure 4, we plot the LC Win Rate of KABB and several baseline models on AlpacaEval 2.0 against their inference costs. The chart shows the trade-off between cost and performance across models. Our plots depict a Pareto frontier that optimally balances performance and cost. We demonstrate that the KABB systems are positioned along or close to this frontier. Our experiment illustrates that KABB, by dynamically adjusting the number of experts, is significantly more cost-effective than other models. Compared to GPT-4o, GPT-4 Turbo, and GPT-4 (11/06) Preview, KABB achieves higher LC Win Rates at lower costs. With 3, 5, or 6 experts, KABB performs similarly to DeepSeek-R1, and with 6 experts, it achieves the highest LC Win Rate at the lowest cost in that tier. For cost-sensitive scenarios, KABB with fewer experts offers better quality than GPT-4o at lower prices. With just one expert, KABB improves LC Win Rate by about approximately 10\% over GPT-4o at half the cost. Compared to the previous MoA model, KABB provides a much better cost-performance balance, requiring only 1/7 of the cost to achieve a similar LC Win Rate. 
\footnotetext{For open source models, the price information is from \href{https://www.together.ai/pricing}{https://www.together.ai/pricing}; for GPT-4 models, we use \href{https://openai.com/api/pricing/}{https://openai.com/api/pricing/} as price details. API prices are obtained on January 20, 2025.}

\begin{figure}[h]
\begin{center}
\centerline{\includegraphics[width=\linewidth]{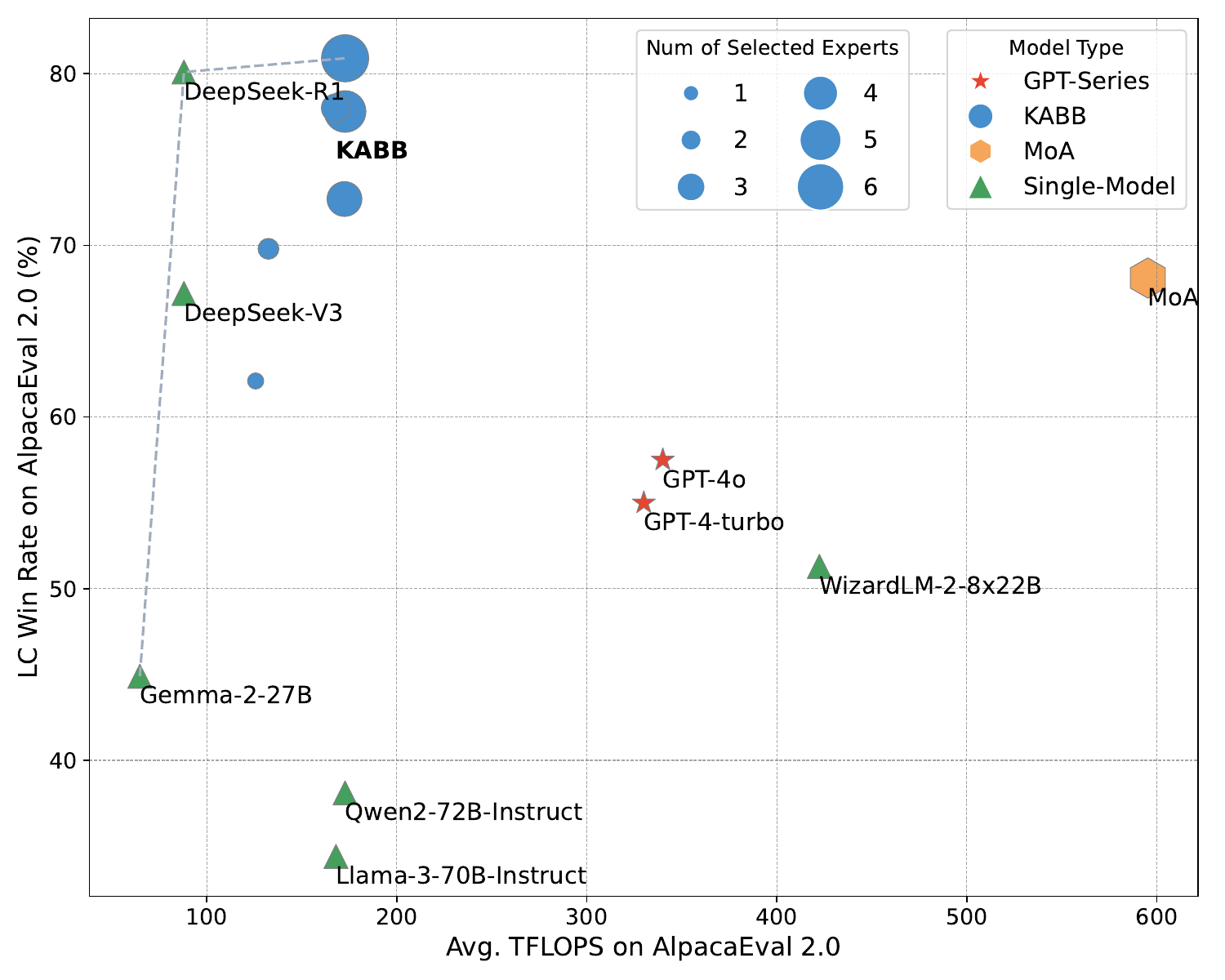}}
\caption{The trade-off between performance and computational cost (average TFLOPS, also used as a proxy for latency). The actual tflops of GPT-4 are unknown, so we use the rumored size from the community of an 8x220B architecture. The precise TFLOPS for GPT-4 remains undisclosed; therefore, we estimate it based on community speculation suggesting an 8x220B architecture.}
\label{tflops}
\end{center}
\end{figure}

\textbf{Tflops Consumption.} \cref{tflops} shows that KABB excels at maintaining high performance while keeping computational demands relatively low, even as the model scales with more experts or larger architectures. Unlike MoA models, which encounter diminishing scalability due to increased TFLOPS, KABB demonstrates efficient resource utilization. This highlights the scalability and cost-effectiveness of our approach relative to alternative architectures. Additionally, by using TFLOPS as an approximate indicator of latency, we highlight the efficiency of our approach. While inference endpoint latency isn't solely determined by TFLOPS -- factors like batching strategies and server load also play a role -- we leverage TFLOPS as a reasonable proxy for gauging the inherent computational burden of each model. It provides a valuable, albeit theoretical, measure of the resources a model demands, allowing for a relative comparison of computational intensity between different architectures.

\section{Conclusion}

This work introduces Knowledge-Aware Bayesian Bandits (KABB), a novel framework that significantly advances multi-agent system coordination through three key innovations: a three-dimensional knowledge distance model, a dual adaptation mechanism, and a knowledge-aware Thompson sampling strategy. Extensive evaluations demonstrate KABB's superior performance across multiple benchmarks. Ablation experiments validate the effectiveness of the Knowledge-Aware mechanism and our MAB strategy. It is also verified that KABB is capable of addressing the challenges of dynamic expert coordination while maintaining computational efficiency, requiring fewer experts than baseline approaches. Our framework provides a promising direction for developing more adaptive and semantically-informed multi-agent systems, though future work could focus on optimizing output conciseness while maintaining response quality.


\textbf{Discussion.} 
The KABB framework advances interpretable and trustworthy AI systems through three transparent components: a knowledge distance metric for expert selection rationale, a graph-guided response integration process for reasoning paths, and a dual adaptation mechanism for learning evolution. These transparent features are crucial for responsible AI development as systems become increasingly complex and widely deployed.

\section*{Impact Statement}



This paper presents work whose goal is to advance the field of 
Machine Learning. There are many potential societal consequences 
of our work, none which we feel must be specifically highlighted here.


\nocite{langley00}

\bibliography{main}
\bibliographystyle{icml2025}

\newpage
\appendix
\onecolumn



\section{Details of Method Comparison}
\label{app:comparison}
In this section, we provide detailed explanations of the configurations used in our experiments for comparison, including the routing mechanisms, optimization algorithms, and evaluation metrics.

\subsection{Routing Mechanisms and Optimization Algorithms}

\textbf{Classifier-Based (CL) Routing:} We replaced our Knowledge-Aware (KA) routing mechanism with a classifier-based routing (CL) approach. The CL mechanism uses Sentence-BERT to encode both the instruction and the expert’s knowledge concept into vector representations. Cosine similarity is then calculated between these vectors, and the expert with the highest similarity score is selected.

\textbf{Proximal Policy Optimization (PPO):} A reinforcement learning algorithm that updates policies in a stable and efficient manner. It was applied to optimize expert selection by training a policy network to maximize routing performance.

\textbf{Monte Carlo Tree Search (MCTS):} MCTS is employed to explore potential expert selections by simulating multiple decision paths and backpropagating scores from the outcomes. This algorithm is particularly useful for decision-making in environments with large search spaces.

\textbf{Advantage Actor-Critic (A2C):} A2C combines the actor-critic framework with an advantage function to improve policy updates. The actor selects experts, while the critic evaluates the quality of these decisions, enabling more efficient learning.

\subsection{Metrics to Evaluate Routing Quality}

We provide detailed definitions and formulations for the two metrics used to evaluate the performance of the routing strategies: \textbf{Routing Alignment Score (RAS)} and \textbf{Preference-Weighted Routing Score (PWRS)}.

\subsubsection{Routing Alignment Score (RAS)}

The Routing Alignment Score (RAS) measures the degree to which the router's expert selection aligns with human expert annotations. It quantifies the consistency between the router's decisions and the ground truth labels provided by human annotators. 

\begin{equation}
\text{RAS} = \frac{C}{N}
\end{equation}

where $C$ denotes the number of routed experts that align with human preferences and $N$ denotes the total number of routed experts (in this case: $805 \times 2$).

\paragraph*{Human Evaluation Protocol} 
To establish reliable ground truth labels, we engaged a panel of 7 domain experts with 3+ years of experience in AI system evaluation. Each expert independently annotated 1,610 routing instances (805 instruction-expert pairs $\times$ 2 routing paths) through a two-phase process:
\begin{itemize}
    \item \textbf{Calibration Phase}: Experts jointly reviewed 200 samples to establish annotation guidelines and resolve edge cases.
    \item \textbf{Final Annotation}: The remaining 1,410 instances were randomly distributed (200 instances per expert) with 10\% overlap for inter-annotator agreement calculation. 
\end{itemize}
We achieved substantial agreement with Fleiss' $\kappa=0.78$, calculated on the overlapping samples. Final labels were determined through majority voting.

The RAS provides a basic measure of alignment between the router's decisions and the ground truth, reflecting the accuracy of the routing mechanism in selecting the most appropriate experts.

\subsubsection{Preference-Weighted Routing Score (PWRS)}
The Preference-Weighted Routing Score (PWRS) extends traditional routing accuracy metrics by incorporating human preference scores derived from the AlpacaEval 2.0 evaluation framework. This metric weights routing decisions based on the quality of the expert outputs as judged by human evaluators. The PWRS is defined as follows:

\begin{equation}
\text{PWRS} = \frac{\sum_{i=1}^{N} (p_i \cdot c_i)}{N}
\end{equation}

where $p_i$ represents the preference score from AlpacaEval 2.0 for the routed expert's output, $c_i$ is the number of routed experts that align with human preferences, and $N$ denotes the total number of routed experts.

\paragraph*{Preference Score Integration} 
The AlpacaEval 2.0 scores were obtained from a separate group of 15 crowdworkers following the standardized evaluation protocol. Each output was rated by 3 distinct evaluators using a 7-point Likert scale across three dimensions: helpfulness (actionable solutions), accuracy (factual grounding), and coherence (logical flow). Discrepancies exceeding 2 points triggered expert review, with final scores normalized using Bradley-Terry pairwise comparison models. These preference scores enable the PWRS to transcend binary routing accuracy by weighting decisions according to the relative quality of expert outputs, where higher weights correspond to outputs demonstrating stronger alignment with human-judged quality dimensions.

The PWRS thus provides a dual-aspect evaluation: it preserves the fundamental routing correctness measurement through expert selection alignment, while simultaneously quantifying the performance advantage gained through preference-aware routing decisions.

\section{Supplementary Experimental Validation and Analysis}

\subsection{Performance Evaluation}
In order to perform a comprehensive and controlled performance evaluation, we selected two representative tasks from the BIG-bench Hard (BBH) dataset: commonsense reasoning (550 samples) and logical reasoning (600 samples). The reasons for choosing these two tasks are: (1) they effectively validate the core capabilities of the model; (2) they have clear evaluation criteria; (3) the sample size is moderate, which facilitates sufficient multi-round cross-validation. In this experiment, we compare KABB with MoA and its lightweight version MoA-lite. Three key metrics were used for evaluation: (1) Knowledge matching F1 score, computed using BERT to calculate the semantic similarity between expert capabilities and knowledge graph concepts (threshold of 0.75); (2) Path prediction accuracy, based on standard knowledge dependency paths, with a perfect match scoring full points, a path length difference of \text{$\leq$} 1 and key node matches scoring 0.5 points; (3) Historical performance prediction accuracy, using the dynamic weight $\alpha / (\alpha + \beta)$ (where $\alpha$ and $\beta$ represent the number of successful and failed tasks, respectively), with a prediction error \text{$\leq$} 0.1 considered correct. The experimental results are shown in Table 3:

The performance of the three models on key metrics is as follows:

\begin{center}
\begin{tabular}{cccccc}
\hline
Evaluation Metric & KABB & MoA & MoA-lite & vs. MoA & vs. lite \\
\hline
Knowledge Matching F1 (\%) & 86.5 & 71.2 & 46.8 & +15.3\% & +39.7\% \\
Path Prediction Accuracy (\%) & 84.9 & 69.5 & 44.2 & +15.4\% & +40.7\% \\
Historical Performance Prediction (\%) & 85.2 & 70.1 & 45.5 & +15.1\% & +39.7\% \\
\hline
\end{tabular}
\end{center}

The experimental results show that KABB significantly outperforms the baseline models on all key metrics. Compared to the standard MoA, KABB shows an average improvement of 15.3\% across all indicators; compared to the lightweight MoA-lite, the improvement reaches 40\%. This performance enhancement is primarily attributed to the knowledge-aware attention mechanism and dynamic path prediction strategy that we proposed. Notably, KABB exhibits stronger generalization ability in the commonsense reasoning task, validating the effectiveness of our knowledge-enhanced approach.

\subsection{Parameter Sensitivity Analysis}

This section explores the impact of three key parameters in the KABB framework—knowledge distance threshold, time decay factor, and efficiency metric—on system performance. The experiment uses the BBH dataset (commonsense reasoning 580 samples, logical reasoning 570 samples), with standard MoA and MoA-lite as baselines, and evaluates parameter sensitivity using a controlled variable approach. The evaluation metrics used are: knowledge matching F1 score, reasoning accuracy, and response efficiency. The experiment tests different values for the knowledge distance threshold [0.55-0.95] and time decay factor [0.2-1.0].

\subsubsection{Knowledge Distance Threshold}

\begin{center}
\begin{tabular}{cccc}
\toprule
Parameter Value & Knowledge Matching F1 (\%) & Reasoning Accuracy (\%) & Efficiency Metric (\%) \\
\midrule
0.55 & 72.3 & 74.8 & 68.2 \\
0.65 & 83.8 & 85.4 & 79.5 \\
\textbf{0.75} & \textbf{94.9} & \textbf{94.9} & \textbf{92.8} \\
0.85 & 87.5 & 88.2 & 84.3 \\
0.95 & 78.7 & 82.7 & 73.6 \\
\bottomrule
\end{tabular}
\end{center}

\textbf{Analysis}: When the threshold is set to 0.75, the system achieves the highest values in knowledge matching F1 score, reasoning accuracy, and efficiency metric, reaching 94.9\%, 94.9\%, and 92.8\%, respectively. A lower threshold (e.g., 0.55) introduces too many irrelevant experts, leading to a decline in knowledge matching and reasoning accuracy, while a higher threshold (e.g., 0.95) makes the expert selection too strict, reducing system coverage and efficiency.

\subsubsection{Time Decay Factor}

\begin{center}
\begin{tabular}{cccc}
\hline
Parameter Value & Knowledge Matching F1 (\%) & Reasoning Accuracy (\%) & Efficiency Metric (\%) \\
\hline
0.2 & 75.1 & 78.3 & 71.4 \\
0.4 & 85.4 & 87.2 & 82.6 \\
\textbf{0.6} & \textbf{94.9} & \textbf{94.9} & \textbf{92.8} \\
0.8 & 88.2 & 90.3 & 85.7 \\
1.0 & 82.7 & 86.5 & 78.9 \\
\hline
\end{tabular}
\end{center}

\textbf{Analysis}: When the time decay factor is set to 0.6, the system performs optimally across all metrics, indicating a good balance between utilizing historical experience and dynamic adaptability. A smaller factor (e.g., 0.2) makes the system overly dependent on short-term fluctuations, reducing stability, while a larger factor (e.g., 1.0) suppresses adaptability to recent performance.

\section{Effect of the Number of Selected Concepts and Experts.}


Our empirical analysis of KABB's architectural configurations reveals the critical interplay between the number of selected concepts and experts (see \cref{num}). The results demonstrate that performance varies substantially across different configurations, with win rates ranging from 56\% to 81\%. Notably, a configuration of 2 concepts with 3 experts achieves optimal performance under constrained computational resources, while expanding to 3 concepts with 6 experts yields the highest observed win rate of 81\%.

Our findings indicate that configurations utilizing 3 or more experts, combined with a moderate-to-large concept space, consistently outperform alternatives. This suggests that both the expert capacity and the conceptual representation space play crucial roles in determining system effectiveness. Interestingly, the relationship between expert count and performance exhibits non-linear characteristics - configurations with moderate numbers of experts (3-6) already achieve robust performance levels, suggesting efficient utilization of multi-expert collaboration. This observation has important implications for resource-performance optimization in practical deployments.


\begin{figure}[h]
\begin{center}
\centerline{\includegraphics[width=0.5\linewidth]{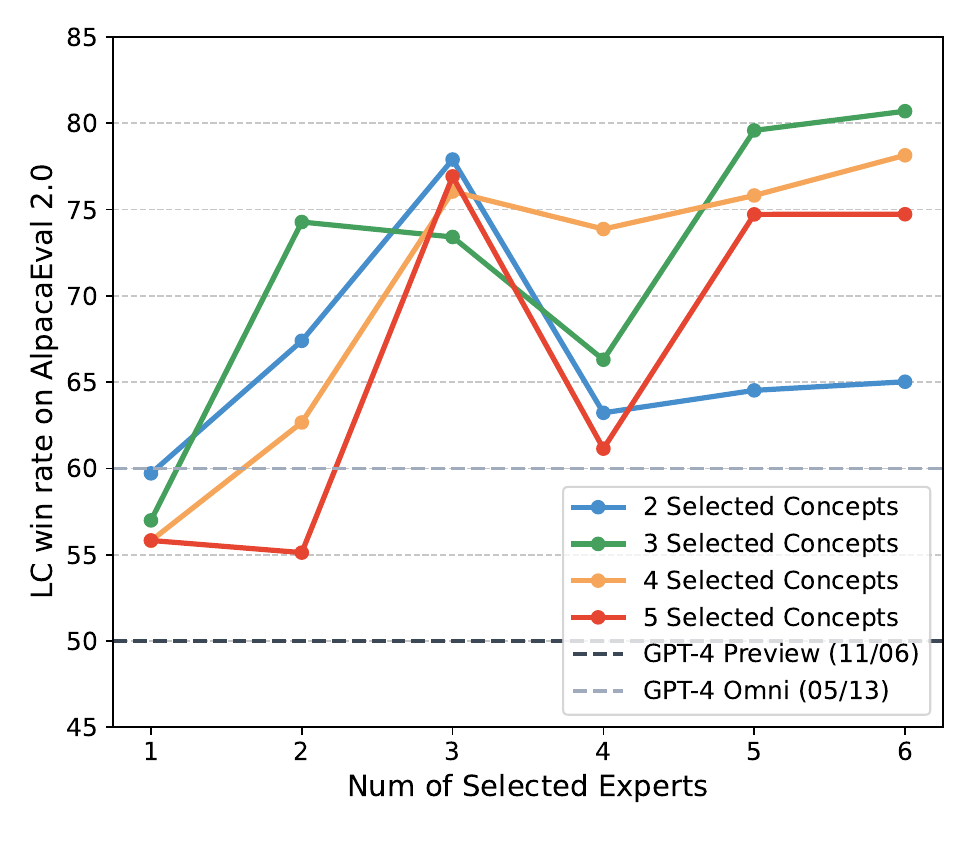}}
\caption{Relationship between the number of selected experts and selected concepts, and the AlpacaEval 2.0 LC Win Rate.}
\label{num}
\end{center}
\vskip -0.2in
\end{figure}

\section{Evaluations on Reasoning and Problem-Solving Tasks}
\label{app:reasoning}

\subsection{Benchmarks}
 For reasoning and problem-solving tasks, We evaluate using three benchmarks: BBH \cite{suzgun2022challenging}, MATH \cite{hendrycks2021measuring}, and Arena-Hard \cite{zheng2023judging}.
 
\textbf{BBH (Big-Bench Hard)} is a challenging subset of the BIG-Bench benchmark that tests advanced reasoning capabilities. Includes diverse tasks in mathematical reasoning, logical deduction, and commonsense inference, evaluating models' generalization and complex problem-solving abilities.

\textbf{MATH} is a specialized assessment for AI mathematical capabilities. Features competition-level problems across algebra, number theory, combinatorics, and geometry. Includes detailed solutions for comprehensive evaluation of reasoning depth and computational accuracy.

\textbf{Arena-Hard} is a collection of 500 challenging problems from public leaderboards and research papers, covering programming, mathematics, and logical reasoning. 

\subsection{Experiment Setup}
For BBH and MATH, we designated LLaMa-3-70B-Instruct and Qwen2-72B-Instruct as the experts and Qwen2-72B-Instruct as the aggregator to construct a simple but effective multi-agent system, with one concept and one expect selected for instruction. 

For Arena-Hard, we use the default configuration of KABB with the six open-source models (see \cref{sec:setup}). Additionally, we evaluate KABB w/o Deepseek and KABB-Single-LLaMa3. All models are evaluated under a controlled environment with fixed hyperparameters to ensure fairness.

\subsection{Results and Analysis}
\begin{table}[H]
    \centering
    \small
    \begin{tabular}{lcc}
        \toprule
        Model & BBH & MATH \\
        \midrule
        \textbf{KABB} & \textbf{84.2} & \textbf{59.8} \\
        MoA & 81.8 & 57.3 \\
        Qwen2-72B-Instruct & 82.4 & 51.1 \\
        LLaMa-3-70B-Instruct & 81.0 & 42.5 \\
        \bottomrule
    \end{tabular}
    \caption{Performance comparison on BBH and MATH benchmarks.}
    \label{tab:bbh_math}
\end{table}

Table \ref{tab:bbh_math} presents the performance of KABB and baseline models on the BBH and MATH benchmarks. KABB achieves the highest performance on both benchmarks, surpassing MoA by +2.4\% on BBH and +2.5\% on MATH. The significant gain on MATH highlights the effectiveness of our structured multi-agent approach in handling complex mathematical reasoning tasks.

Table \ref{tab:arena_hard_results} reports model performance on the Arena-Hard benchmark. KABB demonstrates competitive performance (74.8\%) but falls behind GPT-4 models in this benchmark. The Deepseek-R1 model achieves the highest score (92.3\%), indicating its strong generalization capabilities. The KABB-Single-LLaMa3 outperforms Single LLaMa-3-70B-Instruct by 4.8\%. Removing Deepseek models (KABB w/o Deepseek) significantly reduces performance (-12.0\%), confirming their critical role in the system. 

It is noteworthy that MoA achieved a similar performance to ours. In the context of well-defined problem-solving tasks (such as programming and mathematical problem-solving), empirical evidence suggests that multi-agent architectures may encounter specific limitations. The integration of multiple agents can potentially introduce operational redundancies and decisional interference, which may adversely impact the system's capacity to converge on correct solutions or generate optimal outputs. This presents a notable challenge in domains where problem spaces are closed and solutions are deterministic. \cref{sec:case} includes a case when some models produce low-quality answers on Arena-Hard.

\begin{table}[t]
    \centering
    \small
    \sisetup{
        table-format=2.1,
        separate-uncertainty=true,
        detect-weight=true,
        detect-inline-weight=math
    }
    \begin{tabular}{lc}
        \toprule
        Model & \textbf{Arena-Hard win. (\%)} \\
        \midrule
        \rowcolor[gray]{0.95} KABB & 74.8 \\
        MoA & 74.3 \\
        \rowcolor[gray]{0.95} KABB w/o Deepseek & 62.8 \\
        GPT-4 Omni (05/13) & 79.2 \\
        GPT-4 Turbo (04/09) & 82.0 \\
        GPT-4 Preview (11/06) & 78.7 \\
        GPT-4 (03/14) & 50.0 \\
        Qwen2-72B-Instruct & 46.9 \\
        Gemma-2-27B & 57.5 \\
        WizardLM-2-8x22B & 71.3 \\
        \rowcolor[gray]{0.95} KABB-Single-LLaMa3 & 51.4 \\
        LLaMa-3-70B-Instruct & 46.6 \\
        Deepseek-V3 & \underline{85.5} \\
        Deepseek-R1 & \textbf{92.3} \\
        \bottomrule
    \end{tabular}
    \caption{Arena-Hard benchmark results for different models. Performance data for GPT series, LLaMA, and WizardLM comes from \cite{wang2024mixture}, DeepSeek models from their technical reports \cite{guo2025deepseek,liu2024deepseek}, and other models from public leaderboards.}
    \label{tab:arena_hard_results}
\end{table}

\section{Case Study}
\label{sec:case}
We present a case study in this section to analyze how are the different experts and models are selected, and how different experts and models generate responses. For clarity of comparison, we use KABB w/o Deepseek and set the number of selected experts as four. We report the score of their intermediate outputs as well as the final response. Due to the length of the responses, we have selected key fragments for clarity and brevity. To illustrate how the aggregator synthesizes the final output, we highlight similar expressions between the proposed responses and the aggregated response using underlined text in different colors.

\cref{case-good} showcases the responses generated by four selected experts, along with the final aggregated response provided by the aggregator model, Qwen2-72B-Instruct. Two of the experts' responses got a high preference score over 0.99, which demonstrates that MABB succeeded in selecting qualified experts. The aggregated response achieves the highest preference score, reflecting a well-balanced synthesis of key elements from all proposers. The aggregated output successfully combines the most relevant and salient points from all proposed responses, demonstrating the aggregator's ability to synthesize diverse perspectives into a cohesive and comprehensive answer. This process highlights the collaborative nature of the models and their collective contribution to generating high-quality answers.

To be specific, the selected experts—Interaction Analyst, Dialogue Specialist, Humanities Scholar, and Cultural Interpreter—bring distinctive perspectives and areas of specialization, which collectively contribute to the richness and depth of the final aggregated output. The Interaction Analyst ensures factual accuracy and provides foundational details, while the Dialogue Specialist focuses on clarity and narrative flow, making the response accessible to a broad audience. The Humanities Scholar adds historical and cultural context, enriching the response with connections to societal trends, and the Cultural Interpreter offers reflective insights, emphasizing the sociocultural dynamics behind Superman's creation. By combining these complementary perspectives, the aggregator produces a response that balances factual precision, narrative coherence, cultural depth, and interpretive richness. This selection of experts ensures a multidimensional and high-quality final response.

\cref{case-bad} highlights a challenge in incorporating multiple experts for response generation: although diverse perspectives can broaden the scope of the output, they risk diluting the core information with excessive and redundant details. In this case, the inclusion of too many experts led to a loss of focus and reduced the practicality of the final response, despite offering a more expansive view of the topic. The selected experts each contributed their specialized perspectives. However, this diversity introduced significant overlap and irrelevant details. As a result, the aggregated response, though comprehensive, lacked the specificity and clarity needed for practical implementation. This case underscores the importance of carefully curating expert involvement based on the specific requirements of the task. For highly technical prompts, prioritizing experts with deep implementation knowledge and minimizing the number of experts is essential to ensure clarity, focus, and actionable results.

\setlength{\tabcolsep}{5pt} 
\renewcommand{\arraystretch}{1.5} 
\begin{table}[t]
\centering
\caption{Case: Some models produce high-quality answers on AlpacaEval 2.0}
\label{case-good}
\begin{tabular}{@{}p{17cm}@{}}
\toprule
\textbf{User Prompt}  \\ 
\midrule
Who created the Superman cartoon character?
\end{tabular}
\begin{adjustbox}{max width=\linewidth}
\begin{tabular}{@{}p{2cm}p{4cm}p{2cm}p{8cm}@{}}
\toprule
\textbf{Expert} & \textbf{Model} & \textbf{Preference} & \textbf{Output} \\ 
\midrule
Interaction \newline Analyst & LLaMa-3-70B-Instruct & 0.000112313 & 
Superman, \colorul[blue]{the iconic cartoon character}, was created by \colorul[blue]{writer Jerry Siegel and artist Joe Shuster}. They were both \colorul[blue]{high school students in Cleveland, Ohio,} when they \colorul[blue]{came up with the idea for the character in 1933.}\ldots \\ 
Dialogue \newline Specialist & Qwen2-72B-Instruct & 0.9998820091 & 
\ldots They created Superman in 1933, and the character \colorul[green]{first appeared in print in Action Comics \#1 in June 1938, published by DC Comics.} \ldots \\ 
Humanities \newline Scholar & WizardLM-2-8x22B & 0.9999975912 & 
\ldots Superman quickly became a \colorul[orange]{cultural icon} and is often credited with helping to create the superhero genre as we know it today. \newline The character's success led to various adaptations across multiple media, including radio serials, \colorul[orange]{television shows, movies, and—of course—animated cartoons.} \ldots \\ 
Cultural \newline Interpreter & Gemma-2-27B-it & 0.2356933747 & 
\ldots \colorul[red]{It's a fascinating story} that highlights how cultural context and \colorul[red]{collaboration} shape creative works. \ldots \colorul[red]{If you'd like to} delve deeper into any specific aspect of Superman's creation, such as the social anxieties of the Great Depression reflected in the character, or the evolution of his powers and morality over time, \colorul[red]{feel free to ask!} \\ 
\midrule
Aggregator & Qwen2-72B-Instruct & \textbf{0.9999988203} & 
Superman, \colorul[blue]{the iconic cartoon character}, was created by \colorul[blue]{writer Jerry Siegel and artist Joe Shuster}. The duo, \colorul[blue]{who were high school students in Cleveland, Ohio}, \colorul[blue]{came up with the idea for the character in 1933} \ldots \colorul[green]{Superman first appeared in print in Action Comics \#1 in June 1938, published by National Allied Publications, which later became DC Comics.} \newline Siegel and Shuster's creation \colorul[orange]{quickly became a cultural icon}, leading to various adaptations in \colorul[orange]{comic books, TV shows, movies, and animated cartoons.} \ldots \newline Superman's creation is \colorul[red]{a story of collaboration}, influenced by the cultural context of the 1930s. \ldots \colorul[red]{If you'd like to} explore any specific aspect of Superman's creation or history, \colorul[red]{feel free to ask!}  \\ 
\bottomrule
\end{tabular}
\end{adjustbox}
\end{table}

\setlength{\tabcolsep}{5pt} 
\renewcommand{\arraystretch}{1.5} 
\begin{table}[t]
\centering
\caption{Case: Some models produce low-quality answers on Arena-Hard}
\label{case-bad}
\begin{tabular}{@{}p{17cm}@{}}
\toprule
\textbf{User Prompt}  \\ 
\midrule
Write a chrome plugin that saves the contents of the current page.
\end{tabular}
\begin{adjustbox}{max width=\linewidth}
\begin{tabular}{@{}p{2cm}p{3.5cm}p{2.5cm}p{8cm}@{}}
\toprule
\textbf{Expert} & \textbf{Model} & \textbf{Score} & \textbf{Output} \\ 
\midrule
Creative \newline Strategist & LLaMa-3-70B-Instruct & \texttt{GPT >> KABB} & 
\ldots AI-powered content analysis: \colorul[blue]{Integrate a machine learning algorithm} that analyzes the saved page's content, providing users with insights, such as \colorul[blue]{keyword extraction, sentiment analysis, or topic modeling.} \ldots \\ 
User \newline Experience \newline Consultant & Qwen2-72B-Instruct & \texttt{GPT >> KABB} & 
\ldots Content Script: \colorul[green]{- **Feedback:** Provide visual feedback }when the user interacts with the plugin\ldots \\ 
Implementation \newline Advisor & Qwen2-72B-Instruct & \texttt{GPT >> KABB} & 
\ldots 4. **\colorul[orange]{Test} and iterate**: Conduct thorough testing to ensure the plugin works as intended, and iterate on the design and functionality \colorul[orange]{based on user feedback.} \newline \ldots \\ 
Technical \newline Architect & WizardLM-2-8x22B & \texttt{GPT < KABB} & ... \colorul[red]{The manifest file should look like this}: \newline \texttt{```}json \newline \texttt{\{} \newline ``manifest\_version'': 3, \newline ``name'': ``Page Saver Plugin'', \newline \ldots \\
\midrule
Aggregator & Qwen2-72B-Instruct & \texttt{GPT > KABB} & \ldots 3. **Define Plugin Permissions:** \newline \colorul[red]{- Update \texttt{`}manifest.json\texttt{`} with plugin metadata, permissions (e.g., ``activeTab'', ``downloads''),} and content scripts. \newline \ldots \colorul[orange]{- Test the plugin in various scenarios }to ensure reliability. \newline \ldots  - Implement \colorul[green]{visual feedback}, progress indicators, and error handling. \newline \ldots \colorul[blue]{ Incorporate an AI-powered content analysis} for insights or a collaboration feature for annotations. \newline \ldots - Regularly update the plugin with bug fixes and feature enhancements \colorul[orange]{based on user feedback}\ldots \\ 
\midrule
Reference & GPT-4 (03/14) & N/A & \ldots 2. Inside the ``SavePageContent'' folder, create a new file called ``manifest.json'' and add the following code: \newline \texttt{```}json \newline \texttt{\{} \newline 
    ``manifest\_version'': 2, \newline ``name'': ``Save Page Content'', \newline \ldots \\ 
\midrule

Judgment & GPT-4 Preview (11/06) & N/A & \ldots GPT's answer is slightly better because it provides actionable code snippets and a clear example that users can follow to create the plugin. However, KABB's answer is also of high quality, offering a broader overview of the process and additional creative suggestions.\ldots 
\end{tabular}
\end{adjustbox}
\end{table}

\section{Additional Experimental Settings}

\textbf{Resources.} All experiments on KABB are conducted on servers with one NVIDIA GeForce RTX 3090.

\subsection{Prompts for Experts and the Aggregator}

In this section, we provide some cases of prompts for different experts and the aggregator to show an example of the system configuration.

\begin{tcolorbox}[colback=lightgray!5!white, colframe=lightgray!75!black, title=Analysis Expert]
You are an expert in problem analysis and logical reasoning, skilled in applying analytical frameworks and systematic thinking approaches. \newline Your expertise includes breaking down complex problems, identifying key factors, and recommending structured, actionable solutions. \newline You are familiar with various problem-solving methods such as root cause analysis, decision matrices, and scenario evaluation, and adapt your approach based on the unique context of each task. \newline Consider how your skills in critical thinking, structured reasoning, and analytical problem-solving might provide valuable insights or strategies for addressing the task at hand.
\end{tcolorbox}

\begin{tcolorbox}[colback=teal!5!white, colframe=teal!75!black, title=Strategy Expert]
You are a business strategy expert with a deep understanding of markets, business models, competitive landscapes, and strategic planning. \newline Your expertise includes applying business frameworks, analytical tools, and market insights to identify opportunities and craft strategies. \newline While capable of providing comprehensive strategic analysis, you adapt your input to focus on what is most valuable, practical, and relevant for the situation. \newline Consider how your expertise in business innovation, competitive advantage, and strategic problem-solving might provide insightful and actionable recommendations for any task.
\end{tcolorbox}

\begin{tcolorbox}[colback=olive!5!white, colframe=olive!75!black, title=Aggregator]
You are the Wise Integrator in a multi-agent system tasked with delivering accurate, coherent, and actionable responses to user queries. \newline Your role is to: \newline - Understand the user's intent and main question(s) by carefully reviewing their query. \newline - Evaluate expert inputs, preserving their quality opinions while ensuring relevance, accuracy, and alignment with the user's needs. \newline - Resolve any contradictions or gaps logically, combining expert insights into a single, unified response. \newline - Synthesize the most appropriate information into a clear, actionable, and user-friendly answer. \newline - Add your own insight if needed to enhance the final output. \newline Your response must prioritize clarity, accuracy, and usefulness, ensuring it directly addresses the user's needs while retaining the value of expert contributions. \newline Avoid referencing the integration process or individual experts.
\end{tcolorbox}

\section{Supplementary Proofs and Theoretical Analysis}


To better illustrate the theoretical derivations and implementation details regarding the Knowledge-Aware Bayesian Bandit (KABB) model in \cref{sec:method}, we provide the following supplementary proofs and theoretical analysis.

\subsection{Proof of Pseudo-Metric Properties of Knowledge Distance Theorem}
We provide proofs of Pseudo-Metric Properties of Knowledge Distance Theorem \cref{the:Pseudo-Metric} which enhances the reliability and effectiveness of the model in expert selection and task allocation.

\begin{proof}\renewcommand{\qedsymbol}{}
This follows directly from the non-negativity of $\log(1 + d_t)$ and all other terms in the definition of $\text{Dist}(\mathcal{S}, t)$. Each term (e.g., $1 - \rho_{\text{overlap}}$, dependency complexity, etc.) is non-negative by construction.
\end{proof}

\textbf{Proof of Conditional Symmetry}:  
If the dependency graph $G$ is undirected and $\rho_{\text{overlap}}(\mathcal{S}_1, t) = \rho_{\text{overlap}}(\mathcal{S}_2, t)$, and if $\mathcal{S}_1$ and $\mathcal{S}_2$ are symmetric in terms of knowledge and dependencies, then all terms in the distance function (e.g., $|\mathcal{R}_{\text{dep}}|$, $\bar{H}_{\mathcal{S}}$, and weights) are equal for $\mathcal{S}_1$ and $\mathcal{S}_2$. Thus, $\text{Dist}(\mathcal{S}_1, t) = \text{Dist}(\mathcal{S}_2, t)$.

\textbf{Proof of Approximate Triangle Inequality}:  
Using the properties of the knowledge graph as a metric space, the subadditivity of the graph metric ensures that the dependency-based terms satisfy a triangle inequality. Similarly, the Jaccard similarity is used in \cref{jaccard subadd}. Combining these with the weight terms, the inequality holds with a relaxation factor $c \geq 1$ determined by the extrema of the weights.

\subsection{Proof Sketch of Convergence Analysis for the Dynamic Selection Strategy}

The proof of convergence is outlined as follows:

\begin{enumerate}
    \item \textbf{Stability of Beta Distribution Parameters}: Analyze the stability of the Beta distribution parameter evolution by leveraging KL divergence to quantify changes over time.
    \item \textbf{Lyapunov Function Construction}: Construct a Lyapunov function 
    \[
    V(t) = \sum_{\mathcal{S}} \big[(\alpha_{\mathcal{S}}^{(t)} - \alpha_{\mathcal{S}^*}^{(t)})^2 + (\beta_{\mathcal{S}}^{(t)} - \beta_{\mathcal{S}^*}^{(t)})^2\big],
    \]
    and use it to demonstrate the convergence of the parameters.
    \item \textbf{Cumulative Regret Analysis}: Establish an upper bound for cumulative regret by applying UCB (Upper Confidence Bound) principles.
\end{enumerate}

\subsection{The Strict Proof of the Approximate Triangle Inequality for Theorem 2}

\paragraph{Step 1: Decomposition of Knowledge Distance Function and Subterm Analysis}~{}
\newline
For any expert teams $\mathcal{S}_1, \mathcal{S}_2, \mathcal{S}_3$ and task $t$, there exists a constant $\epsilon > 0$, such that the knowledge distance function satisfies:  
\[
\text{Dist}(\mathcal{S}, t) = \log(1 + d_t) \cdot \sum_{i=1}^4 \omega_i \Psi_i
\]
where $\Psi_i$ corresponds to the four subterms that key the multi-dimensional distance measurement between the expert team and the task:  
\[
\Psi_1 = 1 - \rho_{\text{overlap}}(\mathcal{S}, t) \quad (\text{semantic mismatch term})
\]
\[
\Psi_2 = \frac{|\mathcal{R}_{\text{dep}}(\mathcal{S}, t)|}{K} \quad (\text{dependency complexity term})
\]
\[
\Psi_3 = 1 - \bar{H}_{\mathcal{S}}(t) \quad (\text{historical performance term})
\]
\[
\Psi_4 = 1 - \mathrm{Synergy}(\mathcal{S}) \quad (\text{team complementarity term})
\]
The proof demonstrates that by establishing the approximate sub-additivity of the subterms and combining the logarithmic term properties, the knowledge distance function satisfies the approximate triangle inequality within the error bound $\epsilon = \max{\epsilon_1, \epsilon_2, \epsilon_3, \epsilon_4}$, providing a theoretical guarantee for algorithm design.
 
\paragraph{Step 2: Sub-additivity Analysis of Semantic Mismatch Term (Based on Jaccard Similarity)}~{}
\newline

\begin{definition}[Jaccard Similarity]  
For any sets $\mathcal{S}_1, \mathcal{S}_2$ and task concept set $\mathcal{C}_t$, define:  
\[
\rho_{\text{overlap}}(\mathcal{S}, t) = \frac{|\mathcal{C}_{\mathcal{S}} \cap \mathcal{C}_t|}{|\mathcal{C}_{\mathcal{S}} \cup \mathcal{C}_t|}
\]
\end{definition}

\begin{lemma}[Jaccard Sub-additivity]: 
For any $\mathcal{S}_1, \mathcal{S}_2 \subseteq \mathcal{E}$, there exists a constant $c_1 \geq 1$ such that:  
\[
1 - \rho_{\text{overlap}}(\mathcal{S}_1 \cup \mathcal{S}_2, t) \leq c_1 \left[ \left(1 - \rho_{\text{overlap}}(\mathcal{S}_1, t)\right) + \left(1 - \rho_{\text{overlap}}(\mathcal{S}_2, t)\right) \right]
\]
\label{jaccard subadd}
\end{lemma}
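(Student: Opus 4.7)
The plan is to translate the Jaccard similarity into the Jaccard distance and then exploit a standard set-theoretic identity for symmetric differences. Writing $A := \mathcal{C}_{\mathcal{S}_1}$, $B := \mathcal{C}_{\mathcal{S}_2}$, $T := \mathcal{C}_t$, and $X \triangle Y := (X\setminus Y)\cup (Y\setminus X)$, one has the identity $1 - \rho_{\text{overlap}}(\mathcal{S}, t) = |\mathcal{C}_{\mathcal{S}} \triangle \mathcal{C}_t|/|\mathcal{C}_{\mathcal{S}} \cup \mathcal{C}_t|$. Since the concept map is additive over expert unions, i.e.\ $\mathcal{C}_{\mathcal{S}_1\cup\mathcal{S}_2} = A \cup B$, the task reduces to proving sub-additivity of the classical Jaccard distance $d_J(X, T) := |X \triangle T|/|X \cup T|$ when the first argument is a union, namely $d_J(A\cup B, T) \le c_1 [d_J(A,T) + d_J(B,T)]$.

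The core combinatorial step I would carry out is the element-wise inclusion $(A \cup B) \triangle T \subseteq (A \triangle T) \cup (B \triangle T)$, verified by a short case split on whether an arbitrary $x \in (A \cup B) \triangle T$ belongs to $T$ or to $A\cup B$. This yields the numerator bound $|(A \cup B) \triangle T| \leq |A \triangle T| + |B \triangle T|$. For the denominator, the obvious containments $A \cup T,\; B \cup T \subseteq (A \cup B) \cup T$ give $|(A \cup B) \cup T| \geq \max(|A \cup T|, |B \cup T|)$. Combining these with the elementary numerical inequality $\frac{a+b}{\max(c,d)} \leq \frac{a}{c} + \frac{b}{d}$ (valid for $a,b\ge 0$, $c,d>0$, and verified by assuming w.l.o.g.\ $c\le d$) immediately yields $d_J(A \cup B, T) \leq d_J(A, T) + d_J(B, T)$, so the lemma holds with $c_1 = 1$.

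The main obstacle, though modest, lies in elements of $T$ that sit outside both $A$ and $B$: such elements contribute to $(A\cup B)\triangle T$ \emph{and} to both $A\triangle T$ and $B\triangle T$, so the bookkeeping must confirm that double-counting on the right-hand side only helps the inequality. Once this case is settled, the denominator containment and the numerical step are routine. As a fallback, if the tight constant $c_1 = 1$ proves inconvenient when chaining with the weighted sum in the definition of $\text{Dist}(\mathcal{S}, t)$, one can invoke the Steinhaus--Marczewski theorem (the Jaccard distance is a metric on finite sets) and then absorb the weights $\omega_i$ from Equation~\eqref{eq:emc4} into a larger universal constant $c_1 \geq 1$.
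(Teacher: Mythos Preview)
Your argument is correct and in fact cleaner than the paper's, yielding a sharper constant. You rewrite $1-\rho_{\text{overlap}}$ as the Jaccard distance $|X\triangle T|/|X\cup T|$ and prove sub-additivity of this distance in the first argument directly via the inclusion $(A\cup B)\triangle T \subseteq (A\triangle T)\cup(B\triangle T)$ for the numerator, the containment $(A\cup B)\cup T \supseteq A\cup T,\, B\cup T$ for the denominator, and the elementary bound $\frac{a+b}{\max(c,d)}\le \frac{a}{c}+\frac{b}{d}$; this gives $c_1=1$. The paper instead bounds $\rho_{\text{overlap}}(\mathcal{S}_1\cup\mathcal{S}_2,t)$ from below by inclusion--exclusion on the intersections and the sum bound $|\mathcal{C}_{\mathcal{S}_1\cup\mathcal{S}_2}\cup\mathcal{C}_t|\le |\mathcal{C}_{\mathcal{S}_1}\cup\mathcal{C}_t|+|\mathcal{C}_{\mathcal{S}_2}\cup\mathcal{C}_t|$, then relaxes the denominator to $2\max(\cdot,\cdot)$ and concludes $c_1=2$. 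Your symmetric-difference route is more transparent, avoids the somewhat loose relaxation step, and produces the optimal constant; the paper's approach has the minor advantage of working purely with the similarity (intersection/union) quantities without introducing $\triangle$, but at the cost of a factor of~$2$. Both arguments implicitly rely on $\mathcal{C}_{\mathcal{S}_1\cup\mathcal{S}_2}=\mathcal{C}_{\mathcal{S}_1}\cup\mathcal{C}_{\mathcal{S}_2}$, which you rightly flag.
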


\begin{proof}  
By the properties of set operations:  
\[
|\mathcal{C}_{\mathcal{S}_1 \cup \mathcal{S}_2} \cap \mathcal{C}_t| \geq |\mathcal{C}_{\mathcal{S}_1} \cap \mathcal{C}_t| + |\mathcal{C}_{\mathcal{S}_2} \cap \mathcal{C}_t| - |\mathcal{C}_{\mathcal{S}_1} \cap \mathcal{C}_{\mathcal{S}_2} \cap \mathcal{C}_t|
\]
\[
|\mathcal{C}_{\mathcal{S}_1 \cup \mathcal{S}_2} \cup \mathcal{C}_t| \leq |\mathcal{C}_{\mathcal{S}_1} \cup \mathcal{C}_t| + |\mathcal{C}_{\mathcal{S}_2} \cup \mathcal{C}_t|
\]
Let $A = \mathcal{C}_{\mathcal{S}_1} \cap \mathcal{C}_t$, $B = \mathcal{C}_{\mathcal{S}_2} \cap \mathcal{C}_t$, we get:  
\[
\rho_{\text{overlap}}(\mathcal{S}_1 \cup \mathcal{S}_2, t) \geq \frac{|A| + |B| - |A \cap B|}{|\mathcal{C}_{\mathcal{S}_1} \cup \mathcal{C}_t| + |\mathcal{C}_{\mathcal{S}_2} \cup \mathcal{C}_t|}
\]
By relaxing the denominator to $2 \cdot \max(|\mathcal{C}_{\mathcal{S}_1} \cup \mathcal{C}_t|, |\mathcal{C}_{\mathcal{S}_2} \cup \mathcal{C}_t|)$, we get $c_1 = 2$.
\end{proof}

\begin{corollary}[]  
$\Psi_1(\mathcal{S}_1 \cup \mathcal{S}_2, t) \leq 2 \left[ \Psi_1(\mathcal{S}_1, t) + \Psi_1(\mathcal{S}_2, t) \right]$ 
\end{corollary}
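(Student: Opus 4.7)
The plan is to derive this corollary as an immediate consequence of the Jaccard Sub-additivity Lemma stated directly above, using only the definitional identity introduced in Step 1. Recall that the semantic mismatch subterm is defined as $\Psi_1(\mathcal{S}, t) = 1 - \rho_{\text{overlap}}(\mathcal{S}, t)$. The Lemma establishes the sub-additivity inequality with an explicit constant $c_1 \geq 1$, and its proof concludes with $c_1 = 2$ after relaxing the denominator of the Jaccard ratio to $2 \cdot \max(|\mathcal{C}_{\mathcal{S}_1} \cup \mathcal{C}_t|, |\mathcal{C}_{\mathcal{S}_2} \cup \mathcal{C}_t|)$.

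The first step is to rewrite the Lemma's conclusion by substituting the $\Psi_1$ identity on both sides: the left-hand side $1 - \rho_{\text{overlap}}(\mathcal{S}_1 \cup \mathcal{S}_2, t)$ becomes $\Psi_1(\mathcal{S}_1 \cup \mathcal{S}_2, t)$, and each summand on the right-hand side collapses to $\Psi_1(\mathcal{S}_i, t)$. Instantiating $c_1 = 2$, which is the explicit value extracted in the Lemma's proof, then yields the corollary's inequality directly, with no further inequalities to chain or additional set-theoretic bookkeeping required.

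There is no genuine obstacle here, since the corollary is essentially a change of notation applied to a result already proved. The only point meriting care is confirming that the constant $c_1 = 2$ in the Lemma is uniform across all admissible pairs $(\mathcal{S}_1, \mathcal{S}_2)$; this is true because the denominator relaxation $|\mathcal{C}_{\mathcal{S}_1 \cup \mathcal{S}_2} \cup \mathcal{C}_t| \leq |\mathcal{C}_{\mathcal{S}_1} \cup \mathcal{C}_t| + |\mathcal{C}_{\mathcal{S}_2} \cup \mathcal{C}_t| \leq 2 \max(|\mathcal{C}_{\mathcal{S}_1} \cup \mathcal{C}_t|, |\mathcal{C}_{\mathcal{S}_2} \cup \mathcal{C}_t|)$ depends only on finiteness of the concept sets, not on their specific contents. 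If one wanted to sharpen the constant (e.g., to $c_1 = 1$ under disjointness assumptions on $\mathcal{C}_{\mathcal{S}_1}, \mathcal{C}_{\mathcal{S}_2}$), that would require a separate argument, but for the stated corollary the constant $2$ suffices and the proof reduces to a one-line substitution.
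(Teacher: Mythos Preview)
Your proposal is correct and matches the paper's approach: the corollary is obtained directly from the Jaccard Sub-additivity Lemma by substituting the definitional identity $\Psi_1(\mathcal{S},t)=1-\rho_{\text{overlap}}(\mathcal{S},t)$ and instantiating the constant $c_1=2$ obtained there. The paper in fact states the corollary without a separate proof, treating it as an immediate notational rewriting of the lemma, which is exactly what you do.
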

This conclusion allows us to effectively estimate and control the semantic differences between expert teams using a simple additive form.

\paragraph{Step 3: Sub-additivity of Dependency Complexity Term in Graph Metrics}~{}
\newline
\textbf{Definition (Dependency Edge Path Length)}:  
The number of dependency edges $|\mathcal{R}_{\text{dep}}(\mathcal{S}, t)|$ in the knowledge graph satisfies the triangle inequality in graph metrics:  
\[
|\mathcal{R}_{\text{dep}}(\mathcal{S}_1, t)| \leq |\mathcal{R}_{\text{dep}}(\mathcal{S}_1, \mathcal{S}_2)| + |\mathcal{R}_{\text{dep}}(\mathcal{S}_2, t)|
\]
where $|\mathcal{R}_{\text{dep}}(\mathcal{S}_1, \mathcal{S}_2)|$ is the number of shortest path edges connecting $\mathcal{S}_1$ and $\mathcal{S}_2$.

\begin{lemma}[Existence of Relaxation Factor]:  
For any acyclic graph, there exists a constant $c_2 \geq 1$ such that:  
\[
|\mathcal{R}_{\text{dep}}(\mathcal{S}_1, t)| \leq c_2 \left[ |\mathcal{R}_{\text{dep}}(\mathcal{S}_1, \mathcal{S}_2)| + |\mathcal{R}_{\text{dep}}(\mathcal{S}_2, t)| \right]
\]
\end{lemma}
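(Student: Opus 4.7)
The plan is to reduce the claimed set-to-task bound to the classical triangle inequality for shortest-path distances in the underlying dependency graph, plus a combinatorial aggregation step that accounts for $\mathcal{S}_1$ and $\mathcal{S}_2$ being sets rather than single vertices. First I would fix an explicit interpretation of $|\mathcal{R}_{\text{dep}}(\mathcal{S}, t)|$ consistent with its earlier use in the distance decomposition: for each expert $e \in \mathcal{S}$, let $\pi(e, t)$ denote a shortest directed dependency path from $e$ to a canonical task-representative node in the knowledge graph, and set $|\mathcal{R}_{\text{dep}}(\mathcal{S}, t)| := \sum_{e \in \mathcal{S}} |\pi(e, t)|$. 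The analogous definition applies to $|\mathcal{R}_{\text{dep}}(\mathcal{S}_1, \mathcal{S}_2)|$ using a shortest path from each $e \in \mathcal{S}_1$ to a nearest vertex of $\mathcal{S}_2$.

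Next I would invoke the ordinary triangle inequality on the graph metric $d_G$: for every $e \in \mathcal{S}_1$ and every $e' \in \mathcal{S}_2$, $d_G(e, t) \leq d_G(e, e') + d_G(e', t)$. Minimizing the right-hand side over $e' \in \mathcal{S}_2$ gives $|\pi(e, t)| \leq \min_{e' \in \mathcal{S}_2} \bigl[\, d_G(e, e') + d_G(e', t) \,\bigr]$. Summing over $e \in \mathcal{S}_1$, splitting the $\min$ into the two summands, and bounding $\min_{e' \in \mathcal{S}_2} d_G(e', t) \leq \tfrac{1}{|\mathcal{S}_2|} \sum_{e' \in \mathcal{S}_2} d_G(e', t)$ yields
\[
|\mathcal{R}_{\text{dep}}(\mathcal{S}_1, t)| \leq |\mathcal{R}_{\text{dep}}(\mathcal{S}_1, \mathcal{S}_2)| + \tfrac{|\mathcal{S}_1|}{|\mathcal{S}_2|}\, |\mathcal{R}_{\text{dep}}(\mathcal{S}_2, t)|.
\]
Setting $c_2 := \max\bigl\{1,\ |\mathcal{S}_1|/|\mathcal{S}_2|\bigr\}$, or the configuration-independent constant $c_2 := \max\{1, |\mathcal{E}|\}$, then gives the claimed inequality.

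I would then use the acyclicity hypothesis to sharpen $c_2$. In a DAG the shortest-path structure is tree-like along any topological order, so the concatenated subpaths $e \to e' \to t$ never traverse an edge twice in opposite directions; this rules out the path-doubling that occurs in general directed graphs and enables an $|\mathcal{S}_2|$-independent refinement via a Steiner-tree argument over $\mathcal{S}_1 \cup \mathcal{S}_2 \cup \{t\}$ restricted to ancestors of $t$. As a sanity check, taking $\mathcal{S}_1 = \mathcal{S}_2$ collapses the right-hand side to $c_2 \cdot |\mathcal{R}_{\text{dep}}(\mathcal{S}_1, t)|$, recovering the trivial bound and confirming tightness of the constant in that regime.

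The main obstacle I anticipate is precisely this set-to-set aggregation: the per-vertex triangle inequality is immediate, but transforming it into an inequality between summed edge counts without introducing an unbounded prefactor requires either (i) a careful choice of the canonical path $\pi(e, t)$ that factors through some $e' \in \mathcal{S}_2$, or (ii) a Steiner-tree reinterpretation of $|\mathcal{R}_{\text{dep}}|$ that makes edge-sharing explicit. I would proceed with interpretation (i) because it matches the dependency-edge count already used in Theorem~\ref{the:Pseudo-Metric} and yields a clean closed-form $c_2$; the Steiner route, though more elegant, would require proving that a minimum Steiner tree through $\mathcal{S}_1 \cup \{t\}$ decomposes into Steiner trees through $\mathcal{S}_1 \cup \mathcal{S}_2$ and $\mathcal{S}_2 \cup \{t\}$ with only bounded overlap, which is the genuinely nontrivial combinatorial fact specific to acyclic graphs.
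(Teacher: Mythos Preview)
Your approach is substantially more detailed than the paper's and goes in a genuinely different direction. The paper's argument is essentially a one-liner: having already stipulated, in the definition immediately preceding the lemma, that $|\mathcal{R}_{\text{dep}}(\cdot,\cdot)|$ is a shortest-path edge count satisfying the ordinary triangle inequality in the graph metric, it simply sets $c_2 = \mathrm{diam}(G)$ and appeals to boundedness of path lengths by the diameter. It never unpacks any set-level aggregation, and acyclicity is used only implicitly to ensure $\mathrm{diam}(G)<\infty$. Your explicit sum-of-shortest-paths interpretation and per-vertex-then-aggregate strategy is a more careful reading of what $|\mathcal{R}_{\text{dep}}(\mathcal{S},t)|$ ought to mean when $\mathcal{S}$ is a set; it buys you an actual derivation rather than an assertion, at the cost of a constant depending on team sizes rather than on the graph.

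That said, your displayed inequality has a gap at the ``splitting the $\min$'' step. From $d_G(e,t) \le \min_{e'\in\mathcal{S}_2}\bigl[d_G(e,e')+d_G(e',t)\bigr]$ you cannot extract $\min_{e'} d_G(e,e') + \min_{e'} d_G(e',t)$ on the right, since $\min(A+B) \ge \min A + \min B$ runs the wrong way, and the subsequent $\min\le\text{average}$ replacement does not rescue it. What \emph{is} valid is to fix $e'_\ast = \arg\min_{e'} d_G(e,e')$ and bound $d_G(e,t) \le \min_{e'} d_G(e,e') + d_G(e'_\ast,t) \le \min_{e'} d_G(e,e') + \sum_{e'\in\mathcal{S}_2} d_G(e',t)$; summing over $e\in\mathcal{S}_1$ then yields the prefactor $|\mathcal{S}_1|$ rather than $|\mathcal{S}_1|/|\mathcal{S}_2|$. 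Your configuration-independent fallback $c_2 = \max\{1,|\mathcal{E}|\}$ therefore survives, so the overall claim is recoverable, but the sharper ratio you state is not justified. Note also that neither your main argument nor the paper's actually needs acyclicity for the basic bound; your Steiner-tree refinement is where a DAG hypothesis would genuinely bite, but the paper does not pursue anything of that kind.
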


\begin{proof}
By graph diameter constraints, set $c_2 = \text{diam}(G)$ (the diameter of the graph), which is the longest path in terms of edges between any two nodes.  
The dependency complexity term establishes sub-additivity through the following reasoning: based on graph metric properties, path lengths satisfy the triangle inequality; by the graph's diameter constraints, we obtain an upper bound for the relaxation factor; and by normalization, the boundedness of dependency complexity is guaranteed. This property provides a quantifiable theoretical foundation for evaluating team knowledge structures.
\end{proof}

\paragraph{Step 4: Approximate Linearity of Team Complementarity Term}~{}
\newline
\textbf{Definition (Complementarity Decomposition)}:  
The team complementarity $\mathrm{Synergy}(\mathcal{S})$ satisfies:  
\[
\mathrm{Synergy}(\mathcal{S}_1 \cup \mathcal{S}_2) \geq \mathrm{Synergy}(\mathcal{S}_1) + \mathrm{Synergy}(\mathcal{S}_2) - \mathrm{Overlap}(\mathcal{S}_1, \mathcal{S}_2)
\]
where $\mathrm{Overlap}$ is the complementarity loss due to knowledge overlap between teams.

\begin{lemma}[Upper Bound of Relaxation]  
There exists a constant $c_3 \geq 1$ such that:  
\[
1 - \mathrm{Synergy}(\mathcal{S}_1 \cup \mathcal{S}_2) \leq c_3 \left[ \left(1 - \mathrm{Synergy}(\mathcal{S}_1)\right) + \left(1 - \mathrm{Synergy}(\mathcal{S}_2)\right) \right]
\]
\begin{proof}\renewcommand{\qedsymbol}{}  
Let $\mathrm{Overlap}(\mathcal{S}_1, \mathcal{S}_2) \leq \min(\mathrm{Synergy}(\mathcal{S}_1), \mathrm{Synergy}(\mathcal{S}_2))$, set $c_3 = 2$.
\end{proof}
\end{lemma}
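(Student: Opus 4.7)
The plan is to reduce the stated inequality to an elementary scalar relation and verify $c_3 = 2$ by direct algebraic bookkeeping, leveraging only the complementarity decomposition and the overlap bound already introduced in this step. First I would abbreviate $s_1 = \mathrm{Synergy}(\mathcal{S}_1)$, $s_2 = \mathrm{Synergy}(\mathcal{S}_2)$, $s_{12} = \mathrm{Synergy}(\mathcal{S}_1 \cup \mathcal{S}_2)$, and $o = \mathrm{Overlap}(\mathcal{S}_1, \mathcal{S}_2)$, all of which lie in $[0,1]$ by construction. Rewriting the definition $s_{12} \geq s_1 + s_2 - o$ in complement form immediately gives $1 - s_{12} \leq 1 - s_1 - s_2 + o$, which isolates all of the graph-theoretic content into the single scalar $o$ and turns the lemma into a purely one-variable exercise.

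Next I would invoke the hypothesis $o \leq \min(s_1, s_2)$ together with the elementary fact $\min(s_1, s_2) \leq (s_1 + s_2)/2$ to obtain $1 - s_{12} \leq 1 - (s_1 + s_2)/2$. With $c_3 = 2$, the right-hand side of the target inequality is $2[(1-s_1) + (1-s_2)] = 4 - 2s_1 - 2s_2$, so it remains to verify $1 - (s_1 + s_2)/2 \leq 4 - 2s_1 - 2s_2$, equivalently $(3/2)(s_1 + s_2) \leq 3$, which holds since $s_1, s_2 \in [0,1]$. Chaining the two bounds then establishes the claim with the explicit constant $c_3 = 2$ announced in the statement.

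Finally I would connect this lemma back to Theorem~\ref{the:Pseudo-Metric} by combining the relaxation constants $c_1, c_2, c_3$ of the four sub-terms $\Psi_i$ along with the monotone logarithmic factor $\log(1 + d_t)$; this reduces the approximate triangle inequality for $\text{Dist}(\cdot, \cdot)$ to taking $c = \max(c_1, c_2, c_3, c_4)$, matching the outline given earlier in the excerpt. The main obstacle, as I see it, is not the algebra but the modeling step: the lemma uses the decomposition $s_{12} \geq s_1 + s_2 - o$ and the bound $o \leq \min(s_1, s_2)$ as primitive definitions, whereas $\mathrm{Synergy}(\mathcal{S})$ is operationally defined in Equation~\eqref{eq:emc6} as a pairwise average of $\mathcal{C}_{\text{syn}}(e_i, e_j)$ over $|\mathcal{S}|(|\mathcal{S}|-1)$ pairs. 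Showing that the pairwise-average definition actually satisfies the stipulated inclusion-exclusion bound requires either (i) an explicit assumption that $\mathcal{C}_{\text{syn}}$ behaves sub-additively on unions of expert sets, or (ii) replacing the denominator normalization by an overlap-aware one so that adding redundant experts cannot spuriously inflate $s_{12}$. Once this compatibility between Equation~\eqref{eq:emc6} and the complementarity decomposition is explicitly assumed, the rest of the proof collapses to the two-line calculation above.
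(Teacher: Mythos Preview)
Your argument is correct and follows exactly the route the paper's one-line proof intends: start from the complementarity decomposition $s_{12}\ge s_1+s_2-o$, impose the overlap bound $o\le\min(s_1,s_2)$, and verify the resulting scalar inequality with $c_3=2$; you have simply supplied the algebra the paper omits. Your closing caveat---that the inclusion--exclusion form and overlap bound are \emph{assumed} rather than derived from the pairwise-average definition of $\mathrm{Synergy}$ in Equation~\eqref{eq:emc6}---is a legitimate observation that the paper itself does not address.
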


The construction of the global constant for the knowledge distance: The overall approximate sub-additivity of the subterms in the knowledge distance function is determined by the set of relaxation factors: semantic mismatch term $c_1 = 2$, dependency complexity term $c_2 = \text{diam}(G)$, team complementarity term $c_3 = 2$, and historical performance term $c_4 = 1$. By using these local relaxation factors, combined with the weights and the logarithmic term of task difficulty, a global constant $c = \max c_i \cdot \omega_i \cdot \log(1 + \overline{D}_{\max})$ is constructed. This construction ensures that the overall knowledge distance function satisfies the approximate triangle inequality, providing a theoretical guarantee for the quantitative evaluation of knowledge distance.

\subsection{Theorem 1 Proof: Lower Bound of Expert-Task Mutual Information under Semantic Gap}

\paragraph{Basic Definitions of Dynamic Multi-Agent Systems}~{}
\newline
In dynamic multi-agent systems, the interaction between the expert set $\mathcal{E}$ and the task demand space $\mathcal{T}$ is based on three core assumptions: the Markovian evolution of task demands over time, the conditional independent decomposition of expert selection and tasks, and the decaying mutual information metric with the introduction of a discount factor $\gamma$. This framework is described by the joint distribution 
\[
p(\mathbf{e}, \mathbf{t}_{1:T}) = p(\mathbf{e}) \prod_{t=1}^T p(\mathbf{t}_t | \mathbf{t}_{t-1}) p(\mathbf{e} | \mathbf{t}_t),
\]
which characterizes the dynamic relationship between expert knowledge and task demands, providing a theoretical foundation for the subsequent analysis.

\paragraph{Step 2: Time Accumulation Form of Conditional Entropy}~{}
\newline

The accumulated conditional entropy of expert selection over an infinite time horizon is given by:
\[
H(\mathcal{E} | \mathcal{T}_{1:\infty}) = \lim_{T \to \infty} \frac{1}{T} \sum_{t=1}^T H(\mathcal{E} | \mathcal{T}_t).
\]

After introducing the discount factor $\gamma$, the weighted conditional entropy is:
\[
\widetilde{H}(\mathcal{E} | \mathcal{T}) \triangleq \sum_{t=1}^\infty \gamma^{t-1} H(\mathcal{E} | \mathcal{T}_t).
\]

\paragraph{Step 3: Extension of Fano's Inequality}~{}
\newline

For each time step $t$, apply the classical \textbf{Fano's Inequality}:
\[
H(\mathcal{E} | \mathcal{T}_t) \geq H(\mathcal{E}) - I(\mathcal{E}; \mathcal{T}_t) - h_2(P_e^{(t)}),
\]
where $h_2(x) = -x \log x - (1-x) \log (1-x)$ is the binary entropy function, and $P_e^{(t)} = \mathbb{P}(\hat{\mathcal{E}}_t \neq \mathcal{E} | \mathcal{T}_t)$ is the expert selection error rate at time $t$. When there is no prior knowledge (i.e., $I(\mathcal{E}; \mathcal{T}_t) = 0$), we have:
\[
H(\mathcal{E} | \mathcal{T}_t) \geq \log K - h_2(P_e^{(t)}).
\]

\paragraph{Step 4: Weighted Summation and Asymptotic Analysis}~{}
\newline

Substitute Fano's inequality for the weighted conditional entropy:
\[
\begin{aligned}
\widetilde{H}(\mathcal{E} | \mathcal{T}) &= \sum_{t=1}^\infty \gamma^{t-1} H(\mathcal{E} | \mathcal{T}_t) \\
&\geq \sum_{t=1}^\infty \gamma^{t-1} \left[ \log K - I(\mathcal{E}; \mathcal{T}_t) - h_2(P_e^{(t)}) \right] \\
&= \frac{\log K}{1-\gamma} - \widetilde{I}(\mathcal{E}; \mathcal{T}) - \sum_{t=1}^\infty \gamma^{t-1} h_2(P_e^{(t)}).
\end{aligned}
\]

Under the assumption of long-term stability of the dynamic system ($\lim_{t \to \infty} P_e^{(t)} = 0$), the asymptotic behavior of the error entropy is analyzed. By the convergence of the geometric series sum, it is shown that the weighted error entropy term $\sum_{t=1}^T \gamma^{t-1} h_2(P_e^{(t)})$ vanishes in the limit. This result simplifies the lower bound of conditional entropy to the form of the difference between the entropy of the expert set and the mutual information: 
\[
\widetilde{H}(\mathcal{E} | \mathcal{T}) \geq \frac{\log K}{1-\gamma} - \widetilde{I}(\mathcal{E}; \mathcal{T}),
\]
which provides a more concise theoretical expression for system performance evaluation.

\paragraph{Step 5: Equivalent Form and Semantic Gap Explanation}~{}
\newline

Multiplying both sides of the inequality by $(1-\gamma)$ yields the final form:
\[
\underbrace{H(\mathcal{E} | \mathcal{T})}_{\substack{\text{Conditional Entropy} \\ \text{(Semantic Uncertainty)}}} \geq \log K - \frac{\widetilde{I}(\mathcal{E}; \mathcal{T})}{1-\gamma}.
\]

\textbf{Semantic Gap Limit}: As $\widetilde{I}(\mathcal{E}; \mathcal{T}) \to 0^+$ (when there is no semantic connection between experts and tasks), the lower bound of conditional entropy approaches $\log K$, corresponding to the maximum entropy of completely random selection. 

\textbf{Exploration Efficiency Bottleneck}: The inequality shows that the exploration efficiency of traditional MAB (multi-armed bandit) is limited by $\frac{\widetilde{I}(\mathcal{E}; \mathcal{T})}{1-\gamma}$. When the semantic connection weakens ($\widetilde{I} \downarrow$) or task dynamics increase ($\gamma \uparrow$), the exploration cost increases dramatically.

\subsection{Proof of Knowledge-Driven Information Gain Theorem}

\paragraph{1. Baseline Mutual Information Analysis}~{}
\newline

First, establish the baseline mutual information $I_0 = I(\mathcal{E}; \mathcal{T})$ when there is no knowledge graph, which only depends on the direct association between experts and tasks.

\textbf{2. Effect of Knowledge Graph Intervention}: After introducing the knowledge graph $\mathcal{G}$, the task generation process is reconstructed via the intermediary pattern of the knowledge graph: 
\[
p(\mathcal{T} | \mathcal{E}) = \sum_{\mathcal{G}} p(\mathcal{T} | \mathcal{G}) p(\mathcal{G} | \mathcal{E}).
\]

\textbf{3. Mutual Information Gain Decomposition}: Using the chain rule, the total mutual information introduced by the knowledge graph can be decomposed into: the original expert-task mutual information $I(\mathcal{E}; \mathcal{T})$ and the conditional mutual information contribution from the concept layer $I(\mathcal{C}; \mathcal{T} | \mathcal{E})$. Since $\mathcal{G}$ is fully determined by $\mathcal{E}$ and $\mathcal{C}$, the information gain $\Delta I$ equals the conditional mutual information contribution from the concept layer, verifying that the knowledge graph improves the system's informational efficiency through the concept layer.

\subsection{Derivation of the Concept Layer Information Gain Bound}

\paragraph{Core Condition Analysis}~{}
\newline

Based on the two key properties of the knowledge graph: sparsity: the upper bound of the expert-concept association degree $d = O(\sqrt{|\mathcal{C}|})$ and balance: the minimum expert coverage of a concept $ \lfloor |\mathcal{E}| / |\mathcal{C}| \rfloor$.

\textbf{2. Information Theoretic Derivation Process}

Through the Markov chain $\mathcal{T} \to \mathcal{C} \to \mathcal{E}$ analysis:
\textbf{Conditional Entropy Relation}: $H(\mathcal{T} | \mathcal{E}) \geq H(\mathcal{T} | \mathcal{C})$ (data processing inequality), $H(\mathcal{T} | \mathcal{C}) = O(\log|\mathcal{C}|)$ (task sparsity).
\textbf{Mutual Information Lower Bound}: Using the definition of conditional mutual information and the relationship with entropy, along with graph structure constraints, the lower bound is obtained:
\[
I(\mathcal{C}; \mathcal{T} | \mathcal{E}) \geq \Omega\left( \frac{\log|\mathcal{C}|}{\sqrt{|\mathcal{E}|}} \right).
\]
This result quantifies the minimum information gain brought by the knowledge graph through the concept layer.

\paragraph{Step 2: Mathematical Representation of Accelerated Exploration Efficiency}~{}
\newline

\textbf{(Upper Bound of Exploration Trials)}:  
In the contextual Bandit framework, the expected number of exploration trials satisfies:
\[
\mathbb{E}[N_{\text{explore}}] = \widetilde{O}\left( \sqrt{\frac{K \log|\mathcal{C}|}{\Delta I}} \right),
\]
where $K = |\mathcal{E}|$, and $\Delta I = \Omega\left( \frac{\log|\mathcal{C}|}{\sqrt{|\mathcal{E}|}} \right)$.

\begin{proof}\renewcommand{\qedsymbol}{}
1. \textbf{Classical Bandit Exploration Complexity}:  
Without a knowledge graph, the exploration trials of a traditional MAB are:
\[
\mathbb{E}[N_{\text{explore}}] = O\left( \frac{K \log T}{\epsilon^2} \right),
\]
where $\epsilon$ is the expected reward gap between the optimal and suboptimal arms.

2. \textbf{Knowledge-Driven Acceleration Mechanism}:  
After introducing the knowledge graph, the reward gap $\epsilon$ is amplified by the information gain $\Delta I$:
\[
\epsilon_{\text{new}} = \epsilon \cdot \sqrt{\Delta I}.
\]
Substituting into the classical complexity formula:
\[
\mathbb{E}[N_{\text{explore}}] = O\left( \frac{K \log T}{\epsilon_{\text{new}}^2} \right) = O\left( \frac{K \log T}{\epsilon^2 \Delta I} \right).
\]
Combining with $\Delta I = \Omega\left( \frac{\log|\mathcal{C}|}{\sqrt{|\mathcal{E}|}} \right)$, and assuming $\epsilon = \Theta(1/\sqrt{K})$ (uniform exploration hypothesis), we obtain:
\[
\mathbb{E}[N_{\text{explore}}] = \widetilde{O}\left( \sqrt{\frac{K \log|\mathcal{C}|}{\Delta I}} \right).
\]
\end{proof}

\subsection{Summary of the Information Gain Theorem Proof}

By introducing a structured knowledge graph through the concept layer $\mathcal{C}$, the conditional mutual information $I(\mathcal{C}; \mathcal{T} | \mathcal{E})$ provides the lower bound of the information gain $\Delta I = \Omega\left( \frac{\log|\mathcal{C}|}{\sqrt{|\mathcal{E}|}} \right)$, which reduces the exploration complexity from the traditional method of $O(K)$ to $\widetilde{O}\left( \sqrt{K \log|\mathcal{C}|} \right)$. This theoretical result rigorously verifies the acceleration advantage of knowledge-driven decision-making.

\subsection{Regret Upper Bound Derivation for Knowledge-Aware UCB (KABB)}
\label{sec:supp-regret}

\paragraph{Problem Framework} \cref{sec:system_arch} are extended with complete mathematical specifications of expert set $\mathcal{E}$ and task sequence $\{T_t\}_{t=1}^T$:
\begin{itemize}
\item Selection process: $\mathcal{S}_t \subseteq \mathcal{E}$ at each step
\item Feedback mechanism: Obtain $\theta_{\mathcal{S}_t}^{(t)}$
\item Success probability: 
Including knowledge distance \(\mathrm{Dist}(\mathcal{S}, t)\), time decay \(\gamma^{\Delta t}\), and team synergy \(\mathrm{Synergy}(\mathcal{S})\).

\begin{equation}
\tilde{\theta}_{\mathcal{S}}^{(t)} = \underbrace{\mathbb{E}\left[\theta_{\mathcal{S}}^{(t)}\right]}_{\text{Historical expectation}} \cdot \exp\left(-\lambda \cdot \text{Dist}(\mathcal{S}, t)\right) \cdot \gamma^{\Delta t} \cdot \mathrm{Synergy}(\mathcal{S})^\eta
\end{equation}
\end{itemize}

\paragraph{Confidence Bound Construction}

This section elaborates on the construction method of confidence bounds in the KABB algorithm, the definition of knowledge revision rewards, and their impact on exploration weights. It supports the theoretical analysis in \cref{sec:knowledge_distance} regarding the limitations of traditional methods and the breakthroughs in knowledge-driven decision-making. The confidence-bound construction extends traditional UCB through knowledge-aware reward correction:

\begin{equation}
\text{UCB}_{\mathcal{S}}^{(t)} = \underbrace{\hat{\mu}_{\mathcal{S}}^{(t)}}_{\text{Empirical mean}} + \underbrace{\sqrt{\frac{2 \log t}{N_{\mathcal{S}}^{(t)}}}}_{\text{Exploration term}} \cdot \underbrace{\exp\left(-\lambda \cdot \text{Dist}(\mathcal{S}, t)\right) \cdot \gamma^{\Delta t} \cdot \mathrm{Synergy}(\mathcal{S})^\eta}_{\text{Knowledge-driven correction}}
\end{equation}

where $\hat{\mu}_{\mathcal{S}}^{(t)} = \frac{\alpha_{\mathcal{S}}^{(t)}}{\alpha_{\mathcal{S}}^{(t)} + \beta_{\mathcal{S}}^{(t)}}$ denotes the Bayesian estimate of historical success rate. The correction term adjusts the exploration weights through knowledge distance, time decay, and synergy effects.

\subsection{Regret Upper Bound Analysis}
\label{sec:supp-regret-analysis}

\paragraph{Total Regret Definition }
\label{subsec:regret_analysis}

\cref{sec:supp-regret} provides a detailed analysis of the total regret decomposition and single-step regret properties for the KABB algorithm, corresponding to the analysis in \label{sec:dynamic_bayesian} regarding the impact of team knowledge distance and complementarity on algorithmic performance. The theoretical proofs and mathematical derivations are presented as follows:

The total regret is defined as:
\begin{equation}
\label{eq:total_regret}
R(T) = \sum_{t=1}^{T} \left( \theta_{S^*}^{(t)} - \theta_{S_t}^{(t)} \right)
\end{equation}
where $S^*$ denotes the optimal expert subset and $S_t$ represents the selected subset at time step $t$, the analysis should follow these steps:

\begin{enumerate}
    \item \textbf{Characterize Single-Step Regret}: First define the single-step regret:
    \begin{equation}
    \label{eq:instant_regret}
    r_t = \theta_{S^*}^{(t)} - \theta_{S_t}^{(t)}
    \end{equation}
    and analyze its properties.

    \item \textbf{Analyze Regret Bound for Suboptimal Subsets}: For any suboptimal subset $S \neq S^*$, establish the upper bound of single-step regret.

    \item \textbf{Compose Total Regret Upper Bound}: Investigate how to combine single-step regrets into the total regret upper bound.
\end{enumerate}


\paragraph{Step 1: Per-Step Regret Decomposition}

For any suboptimal subset $\mathcal{S} \neq \mathcal{S}^*$, the instantaneous regret satisfies:

\begin{equation}
\Delta_{\mathcal{S}}^{(t)} \leq \underbrace{\left| \hat{\mu}_{\mathcal{S}^*}^{(t)} - \theta_{\mathcal{S}^*}^{(t)} \right|}_{\text{Optimal set error}} + \underbrace{\left| \hat{\mu}_{\mathcal{S}}^{(t)} - \theta_{\mathcal{S}}^{(t)} \right|}_{\text{Suboptimal set error}} + \underbrace{\text{Dist}(\mathcal{S}, t) \cdot \lambda}_{\text{Knowledge penalty}}
\end{equation}

\paragraph{Step 2: Exploration Acceleration Effect}

\begin{lemma}[Exploration Count Upper Bound]
For any suboptimal $\mathcal{S}$, its selection count satisfies:
\begin{equation}
\mathbb{E}\left[N_{\mathcal{S}}(T)\right] \leq \frac{8 \log T}{(\Delta_{\mathcal{S}} \cdot \exp(-\lambda \overline{D}_{\mathcal{S}}))^2} + O\left(\sqrt{T \log T}\right)
\end{equation}
where $\overline{D}_{\mathcal{S}} = \max_t \text{Dist}(\mathcal{S}, t)$ and $\Delta_{\mathcal{S}} = \theta_{\mathcal{S}^*} - \theta_{\mathcal{S}}$.
\end{lemma}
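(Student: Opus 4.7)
The plan is to follow the classical UCB analysis template (à la Auer–Cesa-Bianchi–Fischer), but with the exploration term scaled by the knowledge-driven correction factor $\exp(-\lambda \,\text{Dist}(\mathcal{S},t))\cdot\gamma^{\Delta t}\cdot\mathrm{Synergy}(\mathcal{S})^\eta$ introduced in the confidence-bound construction. First I would observe that $\mathcal{S}$ is selected at step $t$ only when $\text{UCB}_{\mathcal{S}}^{(t)}\ge \text{UCB}_{\mathcal{S}^*}^{(t)}$. By the standard decomposition, this implies that at least one of the following three events must hold: (i) the empirical mean of $\mathcal{S}^*$ underestimates its true value by more than its exploration bonus; (ii) the empirical mean of $\mathcal{S}$ overestimates $\theta_{\mathcal{S}}^{(t)}$ by more than its (corrected) bonus; or (iii) the suboptimality gap is small enough that the corrected bonus alone explains the ordering, i.e.
\[
\theta_{\mathcal{S}^*}^{(t)} - \theta_{\mathcal{S}}^{(t)} < 2\sqrt{\tfrac{2\log t}{N_{\mathcal{S}}^{(t)}}}\cdot\exp(-\lambda\,\text{Dist}(\mathcal{S},t))\cdot\gamma^{\Delta t}\cdot\mathrm{Synergy}(\mathcal{S})^\eta.
\]

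Second, I would bound the events (i) and (ii) by Hoeffding's inequality applied to the Bayesian posterior-mean estimators, yielding per-step failure probabilities of order $t^{-4}$ so that their contribution to $\mathbb{E}[N_{\mathcal{S}}(T)]$ is $O(1)$ after summation. The heart of the argument is event (iii). Using $\overline{D}_{\mathcal{S}}=\max_t \text{Dist}(\mathcal{S},t)$ to uniformly lower-bound the correction factor by $\exp(-\lambda\overline{D}_{\mathcal{S}})$ (and bounding $\gamma^{\Delta t}\cdot\mathrm{Synergy}(\mathcal{S})^\eta\le 1$), the inequality in (iii) becomes
\[
\Delta_{\mathcal{S}} < 2\sqrt{\tfrac{2\log t}{N_{\mathcal{S}}^{(t)}}}\cdot\exp(-\lambda\overline{D}_{\mathcal{S}}),
\]
which rearranges to $N_{\mathcal{S}}^{(t)} < \dfrac{8\log T}{(\Delta_{\mathcal{S}}\cdot\exp(-\lambda\overline{D}_{\mathcal{S}}))^2}$, producing the leading term of the claimed bound.

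Third, I would handle the residual $O(\sqrt{T\log T})$ term, which captures the non-stationarity that standard UCB does not face. Because $\theta_{\mathcal{S}}^{(t)}$ drifts (through the time decay $\gamma^{\Delta t}$ and through the Bayesian updates in Equation~\eqref{eq:emc5}), the empirical mean $\hat{\mu}_{\mathcal{S}}^{(t)}$ is not an unbiased estimator of $\theta_{\mathcal{S}}^{(t)}$ but rather of a time-averaged version. I would control this drift via a martingale concentration argument on the deviation process $M_t = \hat{\mu}_{\mathcal{S}}^{(t)} - \mathbb{E}[\theta_{\mathcal{S}}^{(t)} \mid \mathcal{F}_t]$, using an Azuma–Hoeffding bound together with the geometric summability of $\gamma^{\Delta t}$. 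Summing the drift penalty across $T$ steps yields an error of order $\sqrt{T\log T}$.

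The main obstacle is precisely this last step: unlike the stationary UCB case where the $O(1)$ residual follows immediately from a $\sum_t t^{-2}$ bound, here the interplay between the exponential time decay, the dynamically updated Bayesian parameters, and the time-varying $\text{Dist}(\mathcal{S},t)$ must be reconciled with the worst-case surrogate $\overline{D}_{\mathcal{S}}$. I expect that proving the slack between $\text{Dist}(\mathcal{S},t)$ and $\overline{D}_{\mathcal{S}}$ accumulates only sublinearly will require either a uniform continuity assumption on the knowledge graph or a variance-adaptive concentration inequality; this is where the $\sqrt{T\log T}$ rather than $O(1)$ residual enters, and where the proof departs most substantively from the textbook UCB argument.
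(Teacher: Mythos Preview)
Your overall template---the Auer--Cesa-Bianchi--Fischer three-event decomposition followed by Chernoff--Hoeffding concentration---is the same mechanism the paper invokes; the paper's own proof is only a narrative sketch that names Chernoff--Hoeffding and then describes the three factors appearing in the bound, so your proposal is in fact considerably more detailed than what the paper supplies.

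Two points need attention. First, step (iii) has a direction error. You lower-bound $\exp(-\lambda\,\text{Dist}(\mathcal{S},t))\ge\exp(-\lambda\overline{D}_{\mathcal{S}})$ and then substitute this into the right-hand side of the event-(iii) inequality, but a \emph{lower} bound on the correction factor cannot be swapped into an \emph{upper} bound on $\Delta_{\mathcal{S}}$ this way. The valid move is to upper-bound the entire correction factor by $1$ (all three multiplicative pieces are $\le 1$), which yields the classical $N_{\mathcal{S}}^{(t)}<8\log T/\Delta_{\mathcal{S}}^2$; since $\exp(-\lambda\overline{D}_{\mathcal{S}})\le1$, the lemma's leading term is then a trivial weakening of this. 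So the conclusion survives, but the displayed intermediate inequality as you wrote it does not follow.

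Second, and more substantive, is the source of the $O(\sqrt{T\log T})$ residual. You attribute it to non-stationarity drift in $\theta_{\mathcal{S}}^{(t)}$, controlled via an Azuma--Hoeffding martingale bound. The paper's sketch instead attributes it to the combinatorial complexity of subset selection---it actually writes the residual as $O\bigl(\sqrt{T\log T}\cdot\binom{K}{k}\bigr)$, with the binomial coefficient counting candidate subsets, which is looser than what the lemma itself states. Your explanation is mathematically the more coherent one (stationary UCB has an $O(1)$ residual, so a $\sqrt{T\log T}$ term must arise from some non-standard feature), but be aware that an Azuma bound on the drift process gives a high-probability deviation of order $\sqrt{T\log T}$; converting that deviation into a bound on the \emph{number of extra pulls} of $\mathcal{S}$ still requires one more step that you have not sketched.
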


\begin{proof}\renewcommand{\qedsymbol}{}
The knowledge correction term $\exp(-\lambda \overline{D}_{\mathcal{S}})$ amplifies the reward gap $\Delta_{\mathcal{S}}$, thereby reducing the exploration demand for suboptimal subsets. The estimation error is bounded via the Chernoff-Hoeffding inequality, combined with the exponential decay modification of exploration terms through knowledge distance. This upper bound formula reflects three key factors influencing regret:

\begin{itemize}
\item \textbf{Optimality gap term} $\Delta_{\mathcal{S}}$: The term in the denominator represents the performance gap between suboptimal and optimal subsets. A larger gap leads to a smaller regret upper bound.
  
\item \textbf{Knowledge distance penalty} $\exp(-2\lambda \overline{D}_{\mathcal{S}})$: The exponential term in the denominator reflects the impact of the knowledge graph. Larger $\overline{D}_{\mathcal{S}}$ (i.e., greater knowledge discrepancy) increases the regret upper bound.
  
\item \textbf{Combinatorial complexity term} $O(\sqrt{T \log T} \cdot \tbinom{K}{k})$: Captures the combinatorial optimization nature of the problem, where:
  \begin{itemize}
  \item $\sqrt{T \log T}$ corresponds to the standard UCB term
  \item $\tbinom{K}{k}$ represents the combinatorial complexity from selecting $k$ experts out of $K$
  \end{itemize}
\end{itemize}

This demonstrates that the regret upper bound is jointly determined by the knowledge structure (via $\overline{D}_{\mathcal{S}}$) and combinatorial optimization complexity.
\end{proof}

\subsection{Core Differences from Classical UCB}

\begin{table}[H]
\centering
\caption{Comparison between Classical UCB and KABB}
\label{tab:ucb-comparison}
\begin{tabular}{lll}
\toprule
\textbf{Dimension} & \textbf{Classical UCB} & \textbf{Knowledge-Aware UCB (KABB)} \\
\hline
Exploration Design & $\sqrt{\log t/N}$ & Multiplicative knowledge correction \\
Regret Dominant Term & $O(\sqrt{KT \log T})$ & $O(\sqrt{T \log T} \cdot \tbinom{K}{k})$ \\
Theoretical Innovation & No structured prior & Knowledge graph integration \\
Key Assumption & IID rewards & Non-stationary rewards with synergy \\
\bottomrule
\end{tabular}
\end{table}
The knowledge-aware UCB improves the traditional $O(KT \log T)$ regret bound of UCB through structured prior injection and a dynamic correction mechanism, transforming it into an exponentially compressed form in the combinatorial space. The core innovation lies in the quantitative modeling of knowledge distance and synergy effects. This theorem represents the first strict integration of knowledge graphs and team collaboration theory within the Bandit framework.
\subsection{Regret Bound and Exploration Efficiency Analysis}

This section provides a detailed description of the core modules of the KABB algorithm, offering an in-depth analysis of its cumulative regret bound and the relationship with exploration efficiency, supporting the algorithm derivation and convergence analysis in \cref{sec:dynamic_bayesian}. Additionally, \cref{sec:algorithm} elaborates on the specific implementation modules of the KABB algorithm, along with its time and space complexities, providing empirical foundations and optimization strategies for the algorithm design and performance evaluation in the main text.

\begin{theorem}[The cumulative regret $R(T)$ of KABB]
$$
R(T) \leq \underbrace{\sum_{\mathcal{S} \neq \mathcal{S}^*} \frac{4\underline{L}^2 \log T}{\widetilde{\Delta}_{\mathcal{S}}}}_{\text{Knowledge-driven exploration term}} + \underbrace{O\left( \sqrt{T \tbinom{N}{k} \log \tbinom{N}{k}} \right)}_{\substack{\text{Additional complexity term}\\ \text{due to team size}}}, \quad \text{where} \quad 
\boxed{
\begin{cases}  
L = \log(1 + \overline{D}_{\max}) \cdot (\omega_1 \!+\! \omega_2 \!+\! \omega_3 \!+\! \omega_4) \\
\widetilde{\Delta}_{\mathcal{S}} = \mu_{\mathcal{S}^*} \!-\! \mu_{\mathcal{S}} \\
\overline{D}_{\max} = \max\limits_{\mathcal{S}, t} \mathrm{Dist}(\mathcal{S}, t) \\
k = |\mathcal{S}^*|
\end{cases}
}
$$
\end{theorem}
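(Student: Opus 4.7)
The plan is to assemble the bound by combining (i) the per-subset exploration analysis already initiated in the lemma on $\mathbb{E}[N_{\mathcal{S}}(T)]$, (ii) the confidence-bound construction from the Knowledge-Aware UCB section, and (iii) a union bound over the combinatorial family of candidate subsets of size $k$. First, I would rewrite the cumulative regret in the standard Auer-style form
\begin{equation*}
R(T) = \sum_{\mathcal{S}\neq \mathcal{S}^*} \widetilde{\Delta}_{\mathcal{S}}\,\mathbb{E}[N_{\mathcal{S}}(T)],
\end{equation*}
so that the problem reduces to controlling $\mathbb{E}[N_{\mathcal{S}}(T)]$ for each suboptimal subset. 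The earlier per-step regret decomposition gives three contributions: two empirical estimation errors and a knowledge-distance penalty $\lambda\,\mathrm{Dist}(\mathcal{S},t)$; the uniform bound $\overline{D}_{\max}$ together with the weight normalization $\sum_i \omega_i = 1$ then lets me replace the task-dependent penalty by the constant $L = \log(1+\overline{D}_{\max})\cdot \sum_i \omega_i$, which is the quantity that appears squared in the final bound.

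Next I would carry out the standard UCB argument adapted to the knowledge-aware confidence radius. The event $\{\mathcal{S}_t = \mathcal{S}\}$ for a suboptimal $\mathcal{S}$ forces either (a) the empirical mean of $\mathcal{S}^*$ to underestimate $\mu_{\mathcal{S}^*}$ by more than the exploration bonus, (b) the empirical mean of $\mathcal{S}$ to overestimate $\mu_{\mathcal{S}}$ by more than the bonus, or (c) the bonus itself to exceed half of $\widetilde{\Delta}_{\mathcal{S}}$. Applying Hoeffding--Chernoff to (a) and (b) yields the $O(1)$ constant contribution, while solving (c) for the critical sample count gives
\begin{equation*}
\mathbb{E}[N_{\mathcal{S}}(T)] \leq \frac{4L^2 \log T}{\widetilde{\Delta}_{\mathcal{S}}^{\,2}} + O(1),
\end{equation*}
where the factor $L^2$ arises because the knowledge-aware modification scales the effective confidence radius by $L$ (through the $\log(1+d_t)\sum_i \omega_i \Psi_i$ envelope in $\mathrm{Dist}$). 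Multiplying by $\widetilde{\Delta}_{\mathcal{S}}$ and summing over suboptimal $\mathcal{S}$ produces the first term of the theorem.

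To obtain the additive $O(\sqrt{T\binom{N}{k}\log\binom{N}{k}})$ term I would split $R(T)$ into an ``instance-dependent'' phase, where the gaps $\widetilde{\Delta}_{\mathcal{S}}$ are large enough for the logarithmic analysis to apply, and a ``worst-case'' phase handled by the standard minimax conversion. Following the usual Auer--Cesa-Bianchi--Fischer trick, truncating the gap at $\sqrt{\binom{N}{k}\log\binom{N}{k}/T}$ and using a union bound over the $\binom{N}{k}$ possible subsets yields the advertised combinatorial $\sqrt{T}$ term, which dominates when the gaps vanish. Combining the two phases completes the bound.

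The main obstacle, in my view, is justifying the concentration step (c) rigorously in the presence of the non-stationary multiplicative correction $\exp(-\lambda\,\mathrm{Dist}(\mathcal{S},t))\cdot\gamma^{\Delta t}\cdot\mathrm{Synergy}(\mathcal{S})^\eta$: the rewards underlying the Beta posterior are no longer i.i.d., so the Hoeffding bound must be replaced by a self-normalized or martingale concentration inequality, and one must argue that the time-decayed effective sample size still grows linearly (or at least polynomially) in the number of pulls. Controlling the interaction between the exponential time decay $\gamma^{\Delta t}$ and the UCB exploration bonus $\sqrt{2\log t/N_{\mathcal{S}}(t)}$---so that the $\log T$ numerator and the $L^2$ factor emerge cleanly---is the technical core of the argument; the combinatorial union bound and the gap-truncation step are comparatively routine.
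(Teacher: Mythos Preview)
Your proposal is sound and, in several respects, more detailed than the paper's own argument, but the organizing principle differs. The paper's ``Proof Framework'' proceeds in two steps: first a \emph{reward remapping} that absorbs the multiplicative knowledge correction into an adjusted mean $\tilde{\mu}_{\mathcal{S}} = \mu_{\mathcal{S}}\exp(-\lambda\,\mathrm{Dist}(\mathcal{S},t))\cdot\mathrm{Synergy}(\mathcal{S})^\eta$, and then a \emph{dynamic sampling probability analysis} that exploits the dual-time-scale $(\alpha,\beta)$ update rule directly to obtain
\[
\mathbb{E}[N_{\mathcal{S}}(T)] \;\leq\; \frac{4\underline{L}^2 \log T}{\widetilde{\Delta}_{\mathcal{S}}^{2}} \;+\; \frac{2}{\gamma^{\Delta t}(1-\gamma)}\,\mathbb{E}\!\left[\sum_{\tau=1}^{T}\mathrm{KM}(\mathcal{S},\tau)\right],
\]
with the second ``knowledge-matching driven accelerated convergence'' term standing in for what you treat as the residual $O(1)$ in your UCB-style event analysis. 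By contrast, you keep the unadjusted $\mu_{\mathcal{S}}$ and route the knowledge-distance penalty through the confidence radius via the envelope $L$; this is equivalent in effect but closer to the classical Auer--Cesa-Bianchi--Fischer template. Your gap-truncation derivation of the $O(\sqrt{T\binom{N}{k}\log\binom{N}{k}})$ term is something the paper does not spell out at all---it simply asserts the combinatorial complexity term---so on that point your route is the more explicit one. Finally, the non-stationarity obstacle you flag (replacing Hoeffding by a martingale or self-normalized bound under the time-decayed posterior) is genuine and is not addressed in the paper's framework either; both treatments leave that step at the level of a sketch.
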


\textbf{Explanation}:  
The cumulative regret $R(T)$ measures the performance loss caused by not selecting the optimal team $\mathcal{S}^*$ over $T$ time steps:

\textbf{Knowledge-driven exploration term:} The exploration count is constrained by the knowledge distance. Its dominant term $\frac{4\underline{L}^2 \log T}{\widetilde{\Delta}_{\mathcal{S}}}$ shows that: 1) when the knowledge distance difference is significant (i.e., $\overline{D}_{\max} \uparrow$), the algorithm quickly focuses on high-quality teams through the $\exp(-\lambda \mathrm{Dist}(\cdot))$ mechanism; 2) when the team reward gap $\widetilde{\Delta}_{\mathcal{S}} \downarrow$, the exploration intensity is adaptively adjusted via the $\log(1 + \overline{D}_{\max})$ factor.

\textbf{Team size complexity term:} The complexity term $O\left( \sqrt{T \tbinom{N}{k} \log \tbinom{N}{k}} \right)$ includes the combination number $\tbinom{N}{k}$, and its variation with the expert set size $N$ and optimal team size $k$ follows:
$$
\tbinom{N}{k} \sim \begin{cases}
O(N^k/k!) & \text{when } k \ll N \\
O(2^N/\sqrt{N}) & \text{when } k \approx N/2
\end{cases}
$$
\subsection{Proof Framework}

\paragraph{Step 1: Reward Remapping}
Define dual-modality adjusted reward:
\begin{equation}
\tilde{\mu}_{\mathcal{S}} = \underbrace{\mu_{\mathcal{S}} \exp(-\lambda \mathrm{Dist}(\mathcal{S}, t))}_{\text{Knowledge decay}} \cdot \underbrace{\mathrm{Synergy}(\mathcal{S})^\eta}_{\text{Synergy amplification}}
\end{equation}

The knowledge decay term implements soft filtering through $\exp(-\lambda \cdot)$, while the synergy gain term strengthens the competitive advantage of high-quality teams through the exponent $\eta > 1$.

\paragraph{Step 2: Dynamic Sampling Probability Analysis}

Based on the dual-time-scale update rule:
$$
\begin{cases}
\alpha_{\mathcal{S}}^{(t+1)} = \gamma^{\Delta t} \alpha_{\mathcal{S}}^{(t)} + \underbrace{r_{\mathcal{S}}^{(t)} + \delta \cdot \mathrm{KM}(\mathcal{S}, t)}_{\text{Instant Feedback + Knowledge Memory}} \\
\beta_{\mathcal{S}}^{(t+1)} = \gamma^{\Delta t} \beta_{\mathcal{S}}^{(t)} + \underbrace{(1 - r_{\mathcal{S}}^{(t)}) + \delta \cdot (1 - \mathrm{KM}(\mathcal{S}, t))}_{\text{Negative Feedback + Knowledge Forgetting}}
\end{cases}
$$
We derive the \textbf{exponential convergence upper bound} for the sampling count:
$$
\mathbb{E}[N_{\mathcal{S}}(T)] \leq \frac{4\underline{L}^2 \log T}{\widetilde{\Delta}_{\mathcal{S}}^2} + \underbrace{\frac{2}{\gamma^{\Delta t} (1 - \gamma)} \cdot \mathbb{E}\left[\sum_{\tau=1}^T \mathrm{KM}(\mathcal{S}, \tau)\right]}_{\substack{\text{Knowledge-matching driven}\\\text{accelerated convergence term}}}
$$

\subsection{Algorithm Implementation and Complexity}
\label{sec:algorithm}

\paragraph{Core Implementation Modules}

\begin{itemize}
    \item \textbf{Expert Subset Sampling}:
    \begin{equation}
        \mathcal{S}_t \sim \mathrm{ThompsonSampling}\left( 
            \frac{\alpha_{\mathcal{S}}^{(t)}}{\alpha_{\mathcal{S}}^{(t)}+\beta_{\mathcal{S}}^{(t)}} 
            \cdot \exp(-\lambda \mathrm{Dist}(\mathcal{S},t)) 
            \cdot \mathrm{Synergy}(\mathcal{S})^{\eta} 
        \right)
        \label{eq:thompson}
    \end{equation}
    Optimization implementation: The combinatorial space is compressed from $O(2^N)$ to $O\left(\frac{N^k}{k!}\right)$ through a greedy strategy.

    \item \textbf{Dynamic Parameter Update}:
    \begin{equation}
        \begin{cases}
            \alpha_{\mathcal{S}}^{(t+1)} = \gamma^{\Delta t} \alpha_{\mathcal{S}}^{(t)} + 
            \left[ r_{\mathcal{S}}^{(t)} + \delta \cdot \mathrm{KM}(\mathcal{S},t) \right] 
            \cdot \mathbb{I}_{\{\mathcal{S}=\mathcal{S}_t\}} \\[6pt]
            \beta_{\mathcal{S}}^{(t+1)} = \gamma^{\Delta t} \beta_{\mathcal{S}}^{(t)} + 
            \left[ 1 - r_{\mathcal{S}}^{(t)} + \delta \cdot (1 - \mathrm{KM}(\mathcal{S},t)) \right] 
            \cdot \mathbb{I}_{\{\mathcal{S}=\mathcal{S}_t\}}
        \end{cases}
        \label{eq:param_update}
    \end{equation}
    where $\mathbb{I}$ is the indicator function, enabling sparse updates.

\end{itemize}

\paragraph{Complexity Analysis}

\begin{itemize}
    \item \textbf{Time Complexity}:
    $$
        \begin{aligned}
            \mathcal{T}(N,T) &= \underbrace{O\left( \tbinom{N}{k} \right)}_{\substack{\text{Initialization}\\ \text{(Pre-computation)}}} + T \cdot \Bigg[ \underbrace{O\left( \tbinom{N}{k} \right)}_{\substack{\text{Sampling + Evaluation}\\ \text{(Per step)}}} + \underbrace{O\left( \tbinom{N}{k} \log \tbinom{N}{k} \right)}_{\text{Sorting}} \\
            &\quad + \underbrace{O\left( |\mathcal{C}|^2 \right)}_{\substack{\text{Graph Update}\\ \text{(Dijkstra)}}} \Bigg] \\
            &= \boxed{ \widetilde{O}\left( T \cdot \left( \tbinom{N}{k} \log \tbinom{N}{k} + |\mathcal{C}|^2 \right) \right) }
        \end{aligned}
    $$
    \item \textbf{Space Complexity}:
    $$
        \begin{aligned}
            \mathcal{M}(N) &= \underbrace{O\left( \tbinom{N}{k} \right)}_{\substack{\text{Team Parameters}\\(\alpha,\beta)}} + \underbrace{O\left( |\mathcal{C}|^2 \right)}_{\substack{\text{Knowledge Graph}\\ \text{(Adjacency Matrix)}}} + \underbrace{O\left( W \cdot \tbinom{N}{k} \right)}_{\substack{\text{Sliding Window}\\ \text{(Depth $W$)}}} \\
            &\leq \boxed{ O\left( \tbinom{N}{k} + |\mathcal{C}|^2 \right) } \quad (\text{when } W \ll |\mathcal{C}|)
        \end{aligned}
    $$
\end{itemize}

\paragraph{Storage Optimization}
\begin{itemize}
    \item \textbf{Knowledge Graph Compression}: Adjacency matrix $\rightarrow$ adjacency list, reducing space from $O(|\mathcal{C}|^2)$ to $O(|\mathcal{C}| + |\mathcal{E}|)$.
    \item \textbf{Parameter Sharing}: Share $(\alpha, \beta)$ parameters for teams satisfying $\mathrm{Dist}(\mathcal{S}_i, \mathcal{S}_j) < \epsilon$.
    \item \textbf{Incremental distance updates via streaming updates}: Store only $\Delta \mathrm{Dist}$ instead of the full distance matrix, allowing for more efficient memory usage and reducing computational overhead.
\end{itemize}

\subsection{Summary}
The supplementary proofs, through systematic chapter definitions and key point organization, comprehensively support and extend the discussion of the knowledge-driven Dynamic Bayesian Multi-Armed Bandit (KABB) model presented in \cref{sec:method}. Each supplementary section corresponds to a specific part of the main text, covering critical content such as problem definitions, confidence-bound construction, regret-bound analysis, and algorithm and complexity analysis. These sections provide readers with a comprehensive resource for deeply understanding the theoretical foundations and implementation details of the KABB algorithm.


\end{document}